\documentclass[a4paper,journal]{IEEEtran}

\IEEEoverridecommandlockouts
\usepackage{graphicx}
\usepackage{epstopdf}
\DeclareGraphicsExtensions{.eps,.pdf} 
\graphicspath{ {figures/} }

\usepackage{cite}
\usepackage{subfigure}
\usepackage{amssymb,amsmath}
\usepackage{algorithmic}
\usepackage{color}
\usepackage{epstopdf}
\usepackage{multirow}
\usepackage{multicol}
\usepackage{cases}
\usepackage{setspace}
\usepackage{amsthm}
\usepackage{caption}
\usepackage{mathrsfs}
\usepackage{delarray}
\usepackage{tikz}
\usepackage[ruled]{algorithm2e}% http://ctan.org/pkg/algorithm2e
\usepackage{balance}
\usepackage[mathscr]{euscript}
 \let\mathscr\relax% just so we can load this and rsfs
\usepackage[scr]{rsfso}

\allowdisplaybreaks
\usepackage{bbm}

%\makeatletter
%\newcommand\restartchapters{\par
%  \setcounter{chapter}{0}%
%  \setcounter{section}{0}%
%  \gdef\@chapapp{\chaptername}%
%  \gdef\thechapter{\@arabic\c@chapter}}
%\makeatother

%\def\Integer{\mathsf{Z\hspace{-0.4em}Z}}
%\def\Natural{\mathrm{I\!N}}
%\def\Real{\mathrm{I\!R}}
%\def\Index{\mathrm{I\!I}}

\newtheorem{theorem}{\bf {Theorem}}

\newtheorem{corollary}{\bf Corollary}
\newtheorem{remark}{{\bf{Remark}}}

\newtheorem{lemma}{\bf {Lemma}}
\newtheorem{assumption}{\bf {Assumption}}

\newcommand{\st}{{\mathrm{s.t.}}}

\newcommand{\Pro}{\mathsf{Prob}}
\newcommand{\UB}{\mathtt{UB}}

\newcommand\argmax{\operatornamewithlimits{argmax}}
\newcommand\argmin{\operatornamewithlimits{argmin}}

\renewcommand{\algorithmicrequire}{\textbf{Input:}}

%
%

%%%%%%  Remove space around section headings.
     \usepackage[compact]{titlesec}
    \titlespacing{\section}{0pt}{2ex}{1ex}
    \titlespacing{\subsection}{0pt}{1ex}{0ex}
    \titlespacing{\subsubsection}{0pt}{0.5ex}{0ex}

\newcommand{\bfa}{\boldsymbol{a}}
\newcommand{\bfr}{\mathbf{r}}
\newcommand{\sfH}{\mathsf{H}}
\newcommand{\sfT}{\mathsf{T}}
\newcommand{\bbE}{\mathbb{E}}

\usepackage{capt-of}% or \usepackage{caption}
\usepackage{dblfloatfix}
\usepackage{varwidth}
\setlength{\skip\footins}{10pt}

\usepackage{tikz}

\newcommand*\circled[1]{\tikz[baseline=(char.base)]{%        
        \node[circle,fill=red!0,draw,inner sep=2pt,opacity=0.5,text opacity=1] (char) {#1};}}

\def\cnt{\stepcounter{enumi}\arabic{enumi}}

\newcommand{\va}{\textbf{\textit{a}}}

\newcommand{\sC}{\mathcal{C}}
\newcommand{\sN}{\mathcal{N}}
\newcommand{\setC}{\mathbb{C}}

\makeatletter
% \g@addto@macro\normalsize{%
%  \setlength\abovedisplayskip{2pt}
%  \setlength\belowdisplayskip{2pt}
%  \setlength\abovedisplayshortskip{2pt}
%  \setlength\belowdisplayshortskip{2pt}
% }
% \usepackage{titlesec}
% \titlespacing{\section}{0pt}{2pt}{0pt}

%%%%%%%% row distance
% \setlength{\arraycolsep}{0.07 cm}
% \renewcommand{\IEEEQED}{\IEEEQEDopen}
% \renewcommand{\baselinestretch}{1.3}
%%%%%%%%

\begin{document}
\bstctlcite{IEEEexample:BSTcontrol}

\title{Network-Aided Intelligent Traffic Steering in 6G O-RAN: A Multi-Layer Optimization Framework}
\author{
	\IEEEauthorblockN{Van-Dinh Nguyen, Thang X. Vu, Nhan Thanh Nguyen, Dinh C. Nguyen,  Markku Juntti,\\ 
\qquad Nguyen Cong Luong, Dinh Thai Hoang, Diep N. Nguyen and  Symeon Chatzinotas}
\thanks{V.-D. Nguyen is with the College of Engineering and Computer Science, VinUniversity, Vinhomes Ocean Park, Hanoi 100000, Vietnam  (e-mail: dinh.nv2@vinuni.edu.vn).}
\thanks{T. X. Vu and S. Chatzinotas are with the Interdisciplinary Centre for Security, Reliability and Trust (SnT), University of Luxembourg, L-1855 Luxembourg City,
Luxembourg (e-mail:            \{thang.vu, symeon.chatzinotas\}@uni.lu).}
\thanks{N. T. Nguyen and M. Juntti are with Centre for Wireless Communications, University of Oulu, P.O.Box 4500, FI-90014, Finland, (email: \{nhan.nguyen, markku.juntti\}@oulu.fi).}
\thanks{N. C. Luong is with the Faculty of Computer Science, PHENIKAA
University, Hanoi 12116, Vietnam (e-mail: luong.nguyencong@phenikaa-uni.edu.vn).}
\thanks{Dinh C. Nguyen is with the Elmore Family School of Electrical and Computer
Engineering, Purdue University, USA (e-mail: nguye772@purdue.edu).}
\thanks{D. T. Hoang and D. N. Nguyen are with the School of Electrical and Data Engineering, University of Technology Sydney, Sydney, NSW 2007, Australia (e-mail:  \{hoang.dinh, diep.nguyen\}@uts.edu.au)}
% \thanks{This work was supported in part by the ERC project
%  AGNOSTIC.}
}

\maketitle
\thispagestyle{empty}
\pagestyle{empty}
% \vspace*{-30pt}
\begin{abstract}
To enable an intelligent, programmable and multi-vendor radio access network (RAN) for 6G networks, considerable efforts have been made in standardization and development of open RAN (O-RAN). So far, however, the applicability of O-RAN in controlling and optimizing RAN functions has not been widely investigated. In this paper, we jointly optimize the  flow-split distribution, congestion control and scheduling (JFCS) to enable an intelligent traffic steering application in O-RAN. Combining tools from network utility maximization and stochastic optimization, we introduce a multi-layer optimization framework that provides fast convergence, long-term utility-optimality and  significant delay reduction compared to the state-of-the-art and baseline RAN approaches. Our main contributions are three-fold: $i$) we propose the novel JFCS framework to efficiently and adaptively direct traffic to appropriate radio units; $ii$) we develop low-complexity algorithms based on the reinforcement learning,  inner approximation and bisection search methods to effectively solve the JFCS problem in different time scales; and  $iii)$ the rigorous theoretical performance results are analyzed to show that there exists a scaling factor to improve the tradeoff between delay and utility-optimization. Collectively, the insights in this work will open the door towards fully automated networks with enhanced control and flexibility. Numerical results are provided to demonstrate the effectiveness of the proposed algorithms in terms of the convergence rate, long-term utility-optimality and  delay reduction.
\end{abstract}
\begin{IEEEkeywords}
Open radio access network, intelligent resource management, traffic steering, reinforcement learning, resource sharing.
\end{IEEEkeywords}

% \newpage

\section{Introduction} \label{Introduction}
With the great success of mobile Internet, fifth generation (5G)  cellular networks have been standardized to meet competing demands (\textit{e.g.} extremely high data rate, low-latency and massive connectivity) and proliferation of heterogeneous devices. However, the existing ``one-size-fits-all'' 5G architecture lacks sufficient intelligence and flexibility to  enable the coexistence of these demands \cite{SaadNet2020}. As we move towards 6G, the latest frontier in this endeavor is open radio access network (O-RAN) by disaggregating RAN components and opening up interfaces, which is considered today the most promising approach to revolutionize the wireless technology from ``\textit{connected things}” to ``\textit{connected intelligence}” \cite{LetaiefComMag19,ORAN2020,ORANASAMSUNG2019}. O-RAN is expected to fully enable programmable, intelligent, interoperable and  multi-vendor RAN \cite{ORANAlliance2019}. 

\begin{figure}[t]
	\centering
	\includegraphics[width=0.95\columnwidth,trim={0cm 0.0cm 0cm 0.0cm}]{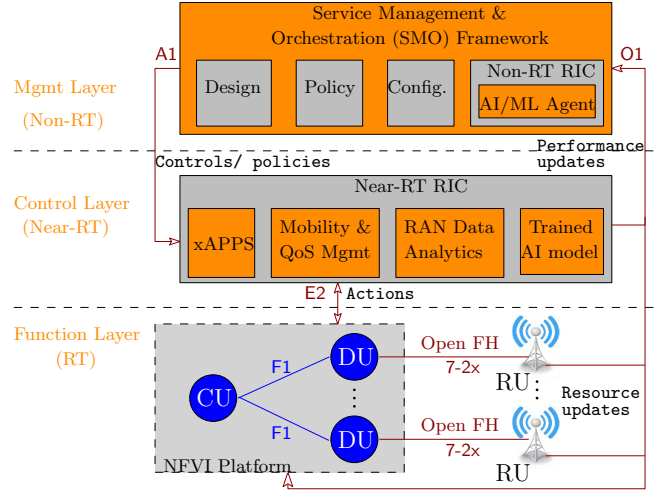}
	\caption{\small O-RAN Alliance reference architecture and workflow \cite{ORAN2020}, with Non-RT and Near-RT RICs. A base station is disaggregated to CU, DU and RU.}
	\label{fig:systemmodel}
\end{figure}

Fig. \ref{fig:systemmodel} illustrates the high-level O-RAN Alliance reference architecture \cite{ORAN2020}, where the “black” parts and interfaces are defined by the 3rd Generation Partnership Project (3GPP), while the “orange” parts and interfaces are defined by O-RAN Alliance. O-RAN initiatives were developed to split the RAN into the radio unit (RU),  distributed unit (DU) and centralized unit (CU), allowing for the interoperability of open hardware (HW), software (SW) and interfaces (\textit{e.g.} O1, A1 and E2) \cite{ORAN2020,ORANASAMSUNG2019}. The O-RAN architecture typically has three main layers (or loops), including the management, control and function layers as illustrated in Fig. \ref{fig:systemmodel}. In particular, the management layer takes place in non-real-time (Non-RT) over 1 s (second) with orchestration, automation functions and trained artificial intelligence (AI) and machine learning (ML) models. The control layer is executed in near real-time (Near-RT) between 10 ms and 1 s to provide functions like radio resource management (RRM), quality-of-service (QoS) management and interference management. Finally, the function layer provides the RAN optimization of a timescale below 10 ms, such as scheduling, power control and radio-frequency assignment, etc. {\color{black}The function layer (CU, DU and RU) is also connected to the Non-RT RIC through the O1 interface for periodic feedback, aiming to fully enable autonomous and self-optimizing networks.} Two important parts introduced in O-RAN are Non-RT RAN intelligent controller (Non-RT RIC) and Near-RT RIC that allow to access RRM functions. The former enables  AI/ML workflow for RAN components and RRM like traffic steering (TS) as well as policy-based guidance of applications  in Near-RT RIC, while the latter is embedded with the control/optimization algorithms of RAN and radio resources \cite{BonatiComMag21,GavrilovskaWPC2020,ORANAlliance2019,WangOpenRAN2019}.

\subsection{Motivation}
{\color{black}Yet the existing research efforts on O-RAN in the academic community are isolated, providing only tailored solutions to problems at either the physical or higher layers \cite{KumarGC2020,Pamukluicc2021,LeeGC2020,RomeroINFOCOM2021}. On the other hand, ML-based approaches (e.g. \cite{KumarGC2020,YangTWC22,Pamukluicc2021,RomeroINFOCOM2021,MotallebTNSM23,LienTII23}) often ignored periodic feedback loops and assumed that the RAN information is available at SMO to perform resource allocation and RAN management, making a fully automated network impractical.
The understanding of how O-RAN could help improve network performance by controlling data traffic and optimizing RAN functions remains rather limited in the literature. In this paper, we aim to fill this gap by conducting an in-depth analysis of the multi-layer design between the physical and higher layers and developing low-complexity algorithms for network control, scheduling and resource allocation in different time scales. We also analyze their impact on the throughput and delay performances in the 6G O-RAN context.}

In light of the above discussions, this paper focuses on designing the TS control to intelligently direct the user traffic through a group of RUs, taking into account available resources and users' service requirements.  To fully realize the potential performance of the TS scheme, O-RAN allows customization of user-centric strategies, multi-path routing and multi-connectivity as well as proactive optimization of network parameters through RICs. However, the problem becomes more challenging in the O-RAN setting due to several complicating factors: $i)$ the traffic demand of user equipements (UEs) often varies over time, and the complete information of the RAN layer is indeterminate at the time of optimization algorithm execution. Hence, the policies and control decisions at the service management and orchestration (SMO) must be adapted to the variation of data traffic; $ii)$ the total data traffic is distributed unevenly to RUs due to different downlink (DL) throughput capabilities, causing high queueing delay; and $iii)$ the strong correlation between  congestion control and scheduling optimization influences the optimal choice of flow-split distribution of data traffic across all RUs. In addition, the deployment of fully automated  networks is an intricate problem in O-RAN that calls for intelligent, scalable
and self-organizing strategies for a holistic multi-layer optimization framework. In this regard, reinforcement learning (RL) plays an important role in achieving long-term utility optimization. \textit{To the best of our knowledge, the TS optimization problem for O-RAN as outlined above has not been thoroughly addressed in the literature.}

\subsection{Main Contributions}
In this paper, we consider a practical scenario where the complete information of the RAN layer is not available at the beginning of each time-frame. Instead, we assume that only their expected values are available to approximately measure queueing delay.  An interesting question naturally arises: \textit{How does the incomplete information of  user traffic demands affect the optimal choices of the TS scheme?} To answer this  question and address the challenges above, we introduce a holistic multi-layer optimization framework that jointly optimizes the flow-split distribution, congestion control and scheduling (called JFCS). The proposed framework effectively characterizes the complex interactions between layers (\textit{e.g.} flow-split selection, congestion control rate and power allocation). In summary, we make the following three key contributions:
\begin{itemize}
    \item We propose a novel JFCS framework to efficiently and adaptively direct traffic to appropriate RUs. Our framework not only generalizes the classical queue-length-based congestion control and scheduling (QCS) method \cite{NeelyToN2008}, but also provides a synergy between RL, QCS and updated network state information, and thus enabling a closed-loop control of the TS in the O-RAN context.
   
    \item {\color{black} To ensure the practicality, we identify inherent properties of the JFCS problem and propose an intelligent resource management algorithm to solve it effectively by leveraging the stochastic optimization framework \cite{neely2010stochastic}. In particular, by exploiting the historical system information  accumulated from the previous time-slots, an RL process is developed to build the smoothed best response while maximizing the long-term utility for each data-flow under arbitrary changes in traffic demands. Given the updated queue-length vector and the optimal flow-split distribution,  two low-complexity algorithms are developed to effectively solve the short-term power control optimization subproblem in an iterative fashion.}
    
    \item Given a scaling factor $\varphi$ to minimize the Lyapunov drift \cite{EryilmazJSAC2006}, the theoretical performance results are analyzed to show that the queueing network is stable. In addition, the expected divergence in queue-length and the optimality gap of congestion control rate still scale as $\mathcal{O}(\sqrt{\varphi})$ and $\mathcal{O}(1/\sqrt{\varphi})$, respectively. Thus, there always exists a scaling factor to balance utility-optimality and latency.    
\end{itemize}
We numerically evaluate the performance of the proposed framework. Results show that the proposed framework can improve network resource utilization significantly while achieving fast convergence and long-term utility-optimality,  compared to state-of-the-art approaches.

\subsection{Paper Organization and Mathematical Notation}
The remainder of this paper is organized as follows. The related work is discussed in Section \ref{sec_RW}. In Section \ref{PreliminariesDefinitions}, we first introduce the network model and then present the problem formulation. The proposed JFCS framework and its solutions are provided in Sections \ref{sec:ORANNUM} and \ref{sec_RAAlgo}, respectively. Section \ref{sec_PerfAnaly} presents the key theoretical performance results of the JFCS framework. 
 Numerical results are given in Section \ref{sec_NumericalResults}, while Section \ref{sec_Conclusion} concludes the paper.

\textit{Mathematical notation:} Throughout this paper, matrices and vectors are written as bold uppercase and lowercase letters, respectively, while the scalar number is denoted in lowercase. $\mathbf{h}^{\sfH}$ is the Hermitian transpose of vector $\mathbf{h}$. The notation $x \sim \sC\sN(0,\sigma^2)$ implies that $x$ is a circularly-symmetric complex Gaussian random variable with zero mean and variance  $\sigma^2$. $\|\cdot\|$ stands for the vector's Euclidean norm.  $\setC$ and $\mathbb{R}$ denote the sets of all complex and real numbers, respectively. Finally, $\mathbb{E}\{\cdot\}$ denotes the expectation of a random variable.

\section{Related Work}\label{sec_RW}
Multi-layer (a.k.a. cross-layer) optimization for traditional cellular  RAN architectures has been extensively studied in the literature (see \textit{e.g.}, \cite{HabibiACCESS19} and references therein). For example, Tang \textit{et al.} \cite{TangTWC15}  studied a multi-layer resource allocation problem to minimize the overall  system power consumption in a cloud-RAN (C-RAN), which jointly optimizes the service scaling,  remote radio head selection, and beamforming. In \cite{LuongTWC18}, a joint design of virtual computing and radio
resource allocation was proposed. It was shown that this approach can efficiently allocate the virtual computing of the baseband unit (BBU) pool to achieve load balancing among users with significantly reduced power consumption. These problems are often solved by the difference of the convex algorithm due to the combinatorial nature and strong coupling between optimization variables. To address this challenge,  graph theory techniques were introduced in \cite{DouikTWC16} and \cite{DouikCL17} to effectively solve the jointly coordinated scheduling and power optimization problem in C-RAN. Recently, the multi-layer network coding was also investigated in \cite{AbiadTMC19,DouikTWC17,AbiadTMC22}, taking into account the rate heterogeneity of different users to remote radio heads. In general, these existing works only optimized   radio resources, while other factors at higher layers (\textit{e.g.} congestion control and routing) were overlooked, making guaranteed multi-layer QoS for O-RAN  infeasible. In addition, the non-causal statistical knowledge of traffic demands is required to model queue states, which is again impractical.

So far, there have been only a few attempts to study the  applicability  of the O-RAN architecture. Kumar \textit{et al.} \cite{KumarGC2020} proposed an automatic 
relation (ANR) approach to manage neighbour cell relationships by leveraging ML techniques, hence improving gNodeB (gNB) handovers. The work in \cite{YangTWC22} introduced an intelligent user access control algorithm based on deep reinforcement learning, aiming to maximize the overall throughput and avoid frequent handovers.
The authors in \cite{Pamukluicc2021} developed an RL-based dynamic function splitting which is shown to be able to effectively decide the O-RAN's function splits and reduce operating costs. Based on the Working Group (WG)-2 AI/ML specifications of the O-RAN Alliance,  Acumos framework and open network automation platform were introduced in  \cite{LeeGC2020} to generate AI/ML models to be deployed in RIC modules and  monitor the designed workflow, respectively. {\color{black} Motalleb \textit{et al.} \cite{MotallebTNSM23} developed an iterative algorithm to jointly optimize service-aware baseband resource allocation and virtual network function activation, thus achieving better data rate and lower end-to-end delay. Very recently, a deep reinforcement learning-based intelligent session management for ultra-reliable and low latency communications (URLLC) was proposed in \cite{LienTII23}  to allocate resources for serving current  and new sessions more efficiently. However,  these studies did not reveal any observable information about the RAN layer to SMO via periodic feedback loops. Thus,  RICs in these studies were unable to monitor RAN in a timely manner to enable their management automation within O-RAN.
}

{\color{black}In traditional RAN architectures, the TS solutions are typically determined by users' radio conditions of a serving cell while treating signals from neighboring cells as interference \cite{ORANWG22021}. The authors in \cite{AnwarIoT22} proposed a distributed TS scheme through edge servers, where the matrix-based  shortest path selection and matrix-based  multipath searching algorithms were developed to dynamically determine the optimal paths for traffic steering. Very recently, Kavehmadavani \textit{et al.} \cite{KavehmadavaniTWC23} showed that a dynamic multi-connectivity (MC)-based TS scheme can help steer traffic flows towards the most suitable cells based on user-centric conditions. In addition, the flow split for each user was purely determined by the RUs' capacity in delivering user traffic demands, resulting in a very suboptimal solution. However, this work did not embed AI/ML solutions in Non-RT RIC within the O-RAN architecture  and assumed that all network information is available at Near-RT RIC to optimize radio resource allocation.}

Different from all the above works and others in the literature that focus on a single layer, we propose a fully multi-layer optimization framework that captures interplays between the physical and higher layers, enabling proactive optimization of network parameters through RICs with periodic feedback loops. This holistic multi-layer optimization framework guarantees the long-term utility-optimality with far less latency than state-of-the-art approaches, opening the door towards fully automated networks with enhanced control and flexibility.

% \textit{Paper Organization and Mathematical Notations:}

\section{Network Model and Problem Formulation} \label{PreliminariesDefinitions}
% \textbf{Notations}: We use numbers, vectors and matrices to denote lower-case, bold-face lower-case, and bold-face upper-case letters, respectively. $(\cdot)^H$ denotes the conjugate transpose of a matrix or vector. The indicator function $\mathbbm{1}_{\{x=y\}}= 1$ (resp. 0) if the condition $x=y$ is true (resp. false). $\mathcal{CN}\bigl(\mu, \sigma^2\bigr)$ is circularly symmetric complex Gaussian random variable with mean $\mu$ and variance $\sigma^2$. 

\subsection{Network Model}
\begin{figure}[h]
	\centering
	\includegraphics[width=0.9\columnwidth,trim={0cm 0.0cm 0cm 0.0cm}]{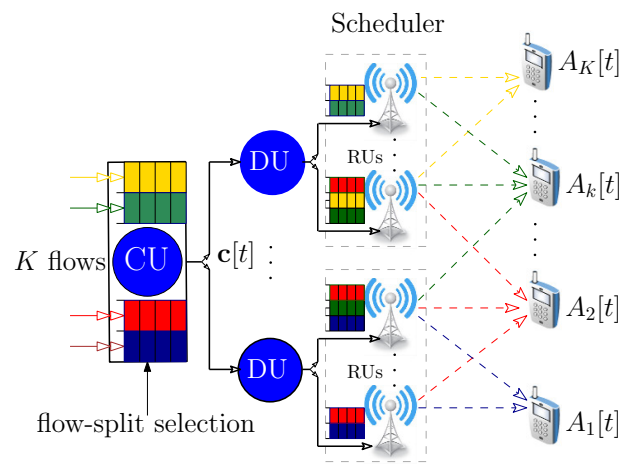}
	\caption{\small Illustration of the O-RAN-based system model enabling TS where each DU connects to multiple RUs towards cost-effective deployment. }
% 	At each time-slot $t$, UE $k$ requests data with a random arrival rate $A_k[t]$, which is  served by multiple RUs.}
	\label{fig:RUUEAssociation}
\end{figure}

\begin{figure}[t]
	\centering
	\includegraphics[width=1\columnwidth,trim={0cm 0.0cm 0cm 0.0cm}]{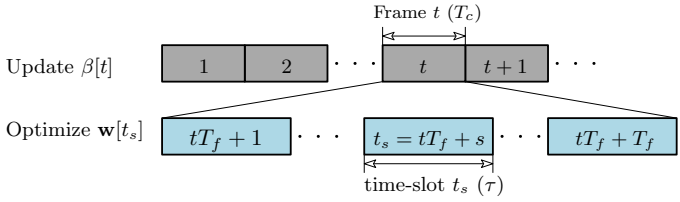}
	\caption{\small Illustration of frame structure with each time-frame $t$ (corresponding to one large-scale coherence time) consisting of $T_f$ time-slots.}
	\label{fig:Timeframe}
\end{figure}

As shown in Fig. \ref{fig:RUUEAssociation}, we consider an O-RAN architecture with one CU, $I$ DUs and $J$ RUs, where each DU connects to multiple RUs for cost-effective deployment. Let us denote by $\mathcal{I}\triangleq\{1,2,\cdots, I\}$ the set of DUs.  We consider a downlink multi-user multiple-input single-output (MU-MISO) system, where $J$ RUs simultaneously serve the set $\mathcal{K}\triangleq\{1,2,\cdots,K\}$ of $K=|\mathcal{K}|$ single-antenna UEs.  The $j$-th RU served by  the $i$-th DU  is referred to as RU $(i,j)$, which is equipped with $M_{i,j}$ antennas. The total number of RUs' antennas is thus $M_{\Sigma}=\sum_{\forall(i,j)}M_{i,j}$. The set of RUs served by DU $i$ is denoted by $\mathcal{J}_i\triangleq\{(i,1),\cdots,(i, J_i)\}$  with $|\mathcal{J}_i| = J_i$ and $\sum_{i\in\mathcal{I}}J_i=J$. The total set of RUs is denoted as $\mathcal{J}\triangleq \cup_{i\in\mathcal{I}}\mathcal{J}_i$.
 We assume that the midhaul (MH) link between the CU and DU and fronthaul link between the DU and RU have sufficient capacity (\textit{i.e.}, high-speed optical ones), so that the transmission latency from CU to RUs and queueing latency at CU and DUs are negligible.

We consider that the system operates in a discrete time-frame indexed by $t\in[1,2,\cdots, T]$, which corresponds to one large-scale coherence time with a duration of  $T_c$, as illustrated in Fig. \ref{fig:Timeframe}. Each frame is divided into $T_f$ time-slots of equal duration $\tau=T_c/T_f$, where the time-slot is indexed by $t_s = tT_f + s$ with $s\in\{1,2,\cdots,T_f\}$. 
At CU, there exist $K$ independent data-flows, each of which is intended for one UE.  
The CU splits the data-flow of UE $k$, say flow $k$, into multiple sub-flows which are possibly transmitted through the set of paths and then aggregated at this UE \cite{VuTWC2019,SinghCOMML2016}, so-called ``traffic steering''. For  data-flow $k$, we denote by $\mathcal{P}_k\triangleq \{(i,j)\}_{\forall (i,j)\in\mathcal{J}}$ the set of path states, including queue states and routing tables.
To improve the system throughput, a subset of separate paths in the set $\mathcal{P}_k$ (\textit{i.e.}, via neighboring RUs indexed by ($i,j$)) should be appropriately selected. Let us denote by $\mathbf{c}_k[t]\triangleq\bigl[c_k^{i,j}[t]\bigr]_{(i,j)\in\mathcal{P}_k}$ the flow-split selection (action) vector for data-flow $k$ in time-frame $t$, \textit{i.e.}, $c_k^{i,j}[t]=1$ if path ($i,j$) $ \in\mathcal{P}_k$ (\textit{i.e.}, via RU $(i,j)$) is selected to transmit data of flow $k$; otherwise, $c_k^{i,j}[t]=0$. We let $\beta_k^{i,j}[t]\in[0,1]$ be the fraction of data-flow $k$  which is routed via path $(i,j)$ in time-frame (state) $t$ by selecting action $c_k^{i,j}[t]$, where $\sum_{(i,j)\in\mathcal{P}_k }\beta_k^{i,j}[t]=1$. The global flow-split decision is denoted by $\mathscr{B}[t]\triangleq\{\boldsymbol{\beta}_k[t], \forall k\bigl| \sum_{(i,j)\in\mathcal{P}_k }\beta_k^{i,j}[t]=1, \forall k\}$, where each column flow-split vector $\boldsymbol{\beta}_k[t]\triangleq\bigr[\beta_k^{i,j}[t]\bigl]_{(i,j)\in\mathcal{P}_k }^{\sfT}\in\mathbb{R}^{J}$ corresponds to the flow-split vector of data-flow $k$.

\subsubsection{Wireless Channel Model and Downlink Throughput}
The large-scale fading coefficients  are assumed to be invariant within one frame $T_c$, while the small-scale fading components with
a low degree of mobility are assumed to be unchanged during time-slot $t_s$ with  duration of $\tau$ and vary independently in the next time-slot. For example, the large-scale fading coefficients may stay invariant for a period of at least 40 small-scale fading coherence intervals for indoor scenarios \cite{Ngo:TWC:Mar2017}.
The channel vector between RU $(i,j)$  and UE $k\in\mathcal{K}$ in time-slot $t_s$ is denoted by $\mathbf{h}^{i,j}_{k}[t_s]\in\mathbb{C}^{M_{i,j}\times 1}$, which follows the Rician fading model  with the Rician factor $\kappa^{i,j}_{k}[t]$. In particular, $\mathbf{h}_{k}^{i,j}[t_s]$ is modeled as $\mathbf{h}_{k}^{i,j}[t_s]  = \sqrt{\xi_{k}^{i,j}}[t]\Bigl(\sqrt{\kappa^{i,j}_{k}[t]/(\kappa^{i,j}_{k}[t]+1)}\bar{\mathbf{h}}_{k}^{i,j}[t] + \sqrt{1/(\kappa^{i,j}_{k}[t]+1)}\tilde{\mathbf{h}}_{k}^{i,j}[t_s] \Bigr)$
where $\xi_{k}^{i,j}[t]$ represents the large-scale fading; $\bar{\mathbf{h}}_{k}^{i,j}[t]$ and $\tilde{\mathbf{h}}_{k}^{i,j}[t_s]\sim \mathcal{CN}(0,\mathbf{I})$ are the line-of-sight (LoS) and non-LoS (NLoS) components, which follow   a  deterministic  channel  and   Rayleigh  fading  models,  respectively.  We let $\mathbf{H}[t_s]\triangleq \bigr[\mathbf{h}_{1}[t_s]\cdots\mathbf{h}_{K}[t_s]\bigr]\in\mathbb{C}^{M\times K}$ denote the channel matrix between all RUs and UEs in time-slot $t_s$ where $\mathbf{h}_{k}[t_s]\triangleq\bigl[(\mathbf{h}_{k}^{i,j}[t_s])^{\sfH}\bigr]^{\sfH}_{\forall i,j}\in\mathbb{C}^{M\times 1}$ corresponds to the channel vector between RUs and UE $k$.

Let us denote by $x_{k}^{i,j}[t_s]$ and $\mathbf{w}_{k}^{i,j}[t_s]\in\mathbb{C}^{M_{i,j}\times 1}$ a unit-power data symbol and a linear beamforming vector transmitted from RU $(i,j)$ to UE $k$ in time-slot $t_s$, respectively. The received signal at UE $k$ in time-slot $t_s$ can be written as
\begin{align}
   & y_k[t_s] = \sum_{(i,j)\in\mathcal{P}_k }(\mathbf{h}_{k}^{i,j}[t_s])^{\sfH}\mathbf{w}_{k}^{i,j}[t_s]x_{k}^{i,j}[t_s] \nonumber\\ 
    &  + \sum_{k'\in\mathcal{K}\setminus\{k\}}\sum_{(i,j)\in\mathcal{P}_{k'} }(\mathbf{h}_{k}^{i,j}[t_s])^{\sfH}\mathbf{w}_{{k'}}^{i,j}[t_s]d_{k'}^{i,j}[t_s] + \omega_k[t_s]
\end{align}
where $\omega_k[t_s]$ is the additive white Gaussian background noise (AWGN) with power $N_0$. The downlink achievable rate (bits/s) of UE $k$ from RU $(i,j)$ in time-slot $t_s$ can be written as $r_{k}^{i,j}(\mathbf{w}[t_s])\triangleq W\log_2\bigl(1 + \gamma_{k}^{i,j}(\mathbf{w}[t_s])\bigr)$, where $W$ is the system bandwidth and the  signal-to-interference-plus-noise ratio (SINR) $\gamma_{k}^{i,j}(\mathbf{w}[t_s])$ is given by $\gamma_{k}^{i,j}(\mathbf{w}[t_s])=|(\mathbf{h}_{k}^{i,j}[t_s])^{\sfH}\mathbf{w}_{k}^{i,j}[t_s]|^2/\Phi_k^{i,j}(\mathbf{w}[t_s])$ with
\begin{align}
   &\Phi_{k}^{i,j}(\mathbf{w}[t_s]) \triangleq \underbrace{\sum_{(i',j')\in\mathcal{P}_k\setminus\{(i,j)\}}|(\mathbf{h}_{k}^{i',j'}[t_s])^{\sfH}\mathbf{w}_{k}^{i',j'}[t_s]|^2}_{\text{Intra-user interference}} \nonumber\\
    &\qquad\quad + \underbrace{ \sum_{k'\in\mathcal{K}\setminus\{k\}}\sum_{(i,j)\in\mathcal{P}_{k'}}|(\mathbf{h}_{k}^{i,j}[t_s])^{\sfH}\mathbf{w}_{k'}^{i,j}[t_s]|^2}_{\text{Inter-user interference}} +N_0
\end{align}
\noindent and  $\mathbf{w}[t_s]\triangleq\bigl[(\mathbf{w}_{k}^{i,j}[t_s])^{\sfH}\bigr]^{\sfH}_{k\in\mathcal{K},(i,j)\in\mathcal{P}_k}$ being the vector embracing all the beamformers.
The overall effective  data rate of data-flow $k$ (or UE $k$) can be computed as
$r_k(\mathbf{w}[t_s]) = \sum_{(i,j)\in\mathcal{P}_k}$ $ r_{k}^{i,j}(\mathbf{w}[t_s])$. Then, for  each $\mathbf{H}[t_s]$ and a given $\boldsymbol{\beta}_k[t]$, we define the \textit{instantaneous  achievable
rate region} under  beamformer $\mathbf{w}[t_s]$ as
\begin{IEEEeqnarray}{cl}
\mathscr{C}_{\mathbf{H}[t_s]}\triangleq \Bigg\{r_k(\mathbf{w}[t_s]), \forall k\Biggl|  {\begin{matrix}r_k(\mathbf{w}[t_s]) = \sum\limits_{(i,j)\in\mathcal{P}_k}r_{k}^{i,j }(\mathbf{w}[t_s]) \\
\sum\limits_{k\in\mathcal{K}}\|\mathbf{w}_{k}^{i,j}[t_s]\|_2^2 \leq P_{\max}^{i,j}, \forall (i,j) \end{matrix}} \Bigg\}\nonumber\\ 
\end{IEEEeqnarray}
where $P_{\max}^{i,j}$ denotes the transmit power budget of RU $(i,j)$. We note that the achievable rate $r_{k}^{i,j}(\mathbf{w}[t_s])$ is upper bounded by $r_{k}^{i,j}(\mathbf{w}[t_s]) \leq W\log_2\bigl(1 + P_{\max}^{i,j}\bigl\|\mathbf{h}_{k}^{i,j}[t_s]\bigr\|_2^2/N_0\bigr)$ for a limited transmit power budget $P_{\max}^{i,j}$, leading to $r_k(\mathbf{w}[t_s]) < \infty, \forall k,t$.

\subsubsection{Queueing Model} 
As illustrated in Fig. \ref{fig:RUUEAssociation}, each RU maintains a separate queue for each UE. Let $A_k[t]$ (bits/s) be the total rate of data arriving at RU destined for UE  $k$ in time-frame $t$ with mean $\mathbb{E}\{A_{k}\}=\bar{A}_k$. We assume that $A_k[t]$ is random and upper bounded by a finite constant $A^{\max}$, such as $A_k[t] \leq A^{\max}<\infty, \forall k,t,$ and  unknown at  the beginning
of time-frame $t$. As a result, the queue-length of data-flow $k$ at RU $(i,j)$ in time-slot $t_s$ evolves as follows: $q_k^{i,j}[t_{s+1}] = \Bigl[q_k^{i,j}[t_s] + \beta_k^{i,j}[t]A_k[t]\tau - r_{k}^{i,j}(\mathbf{w}[t_s])\tau \Bigr]^+$, where $[x]^+\triangleq\max\{0,x\}$. By $\mathbf{q}[t_s]\triangleq\bigl[q_k^{i,j}[t_s] \bigr]^{\sfT}_{k,(i,j)}$ and following \cite{EryilmazJSAC2006}, a queueing network is \textit{stable}  if the steady-state total queue-length remains finite, such as
\begin{align}\label{steady-state}
    \underset{t_s\to\infty}{\limsup}\ \mathbb{E}\{\|\mathbf{q}[t_s]\|_1\} < \infty.
\end{align}

%%%%%%%%%%%%%%%%%%%%%%%%%%%%%%%%
\subsection{Problem Formulation}
Let $\bar{r}_k\triangleq \underset{t_s\to\infty}{\lim}\frac{1}{t_s}\sum_{\ell=1}^{t_s} r_k(\mathbf{w}[\ell])$ denote the long-term average rate of data-flow $k$. Each UE $k$ is associated with a utility function, denoted by $U_k(\bar{r}_k)$. 
To facilitate the analysis presented later, we make the following assumption to the utility function \cite{VuTWC2019,EryilmazJSAC2006,KeyInforcom2007,LiuJSAC2017}. 
\begin{assumption}\label{assp:1}
The utility function $U_k(\cdot)$ is assumed to satisfy the following conditions
\begin{itemize}
    \item 
    $U_k(\cdot)$ is twice continuously differentiable,  increasing, and strictly concave.
    \item There exist positive constants $0 < \psi < \Psi < \infty$, such as $\psi\leq - U_{k}^{''}(\bar{r}_k) \leq \Psi, \forall \bar{r}_k\in[0,\ \bar{r}^{\max}]$, with $\bar{r}^{\max}$ being the maximum  long-term  average  rate  of any data  flow.
\end{itemize}

\end{assumption}

Our goal is to maximize the network utility function $\sum_{k\in\mathcal{K}}U_k(\bar{r}_k)$, subject to the probabilistic delay constraint,  achievable rate region and  queue-stability constraint. Based on the network utility maximization (NUM) framework, the joint flow-split distribution, congestion control and scheduling optimization problem (JFCS) can be mathematically formulated as
\begin{subequations} \label{eq:JFCS1}
	\begin{IEEEeqnarray}{cl}
		\textbf{JFCS}:\ &\underset{\boldsymbol{\beta}, \bar{\mathbf{r}}, \mathbf{w}}{\mathrm{max}} \   \sum_{k\in\mathcal{K}}U_k(\bar{r}_k)\label{eq:JFCS1a} \\
		& \st  \  \underset{t_s\to\infty}{\limsup}\ \mathbb{E}\{\|\mathbf{q}[t_s]\|_1\} < \infty                       \label{eq:JFCS1b}\\
		     & \qquad  r_k(\mathbf{w}[t_s])\in \mathscr{C}_{\mathbf{H}[t_s]}, \forall t_s, k\in\mathcal{K}                        \label{eq:JFCS1c}\\
		    &\qquad \boldsymbol{\beta}_k[t]\in \mathscr{B}[t], \forall t, k\in\mathcal{K}\label{eq:JFCS1d} \\
		     &\qquad \Pro\Bigl(\frac{q_k^{i,j}[t_s]}{\bar{A}_k} \leq \bar{d}_k\Bigr) \geq \epsilon_k,\ \forall t_s, k, (i,j) \label{eq:JFCS1e} \quad
	\end{IEEEeqnarray}
\end{subequations}
where $\boldsymbol{\beta}\triangleq\bigl[\boldsymbol{\beta}_k^{\sfT}\bigr]^{\sfT}_{k\in\mathcal{K}}$ and $\bar{\mathbf{r}}\triangleq\bigr[\bar{r}_k\bigr]^{\sfT}_{k\in\mathcal{K}}$. {\color{black}Constraint \eqref{eq:JFCS1e} ensures different minimum outage delay requirements for sub-flows, where $\bar{d}_k$ and $\epsilon_k$ ($0\ll\epsilon_k \leq 1$) are the maximum allowable average delay and the required reliable communication for each UE, respectively. It is stated that the probability of $\frac{q_k^{i,j}[t_s]}{\bar{A}_k} \leq \bar{d}_k$ (\textit{i.e.} UEs’ maximum allowable delay) should be greater than or equal to a certain positive constant $\epsilon_k$. This probabilistic  constraint is used to tackle the randomness and variability of the arrival rate while  maintaining a certain level of performance.}

\begin{remark}
It is clear that  problem \eqref{eq:JFCS1} needs to be executed in different time scales (\textit{i.e.}, over the long-term scale $t$ at Non-RT RIC and the short-term scale $t_s$ at Near-RT RIC), as shown in Fig. \ref{fig:Timeframe}. In particular, the global  flow-split vector  $\boldsymbol{\beta}[t]$ is only updated once per time-frame $t$ to reduce computational complexity and information exchange, as well as to ensure a stable queueing system. On the other hand, the beamforming vector $\mathbf{w}[t_s]$ and the instantaneous  achievable  rate $\mathbf{r}[t_s]$ are optimized  based on the real-time effective CSI $\mathbf{H}[t_s]$ in time-slot $t_s$,  adapting to dynamic environments. 
\end{remark}

%%%%%%%%%%%%%%%%%%%%%%%%%%%%%%%%%%%%%%%%%
\section{JFCS-based Network Utility Optimization}\label{sec:ORANNUM}

\subsection{Tractable Form of the JFCS Problem \eqref{eq:JFCS1}}\label{sec:ORANNUM_A}
\textbf{Challenges of Solving JFCS Problem \eqref{eq:JFCS1}}:
We can observe that constraint \eqref{eq:JFCS1c} is nonconvex while  \eqref{eq:JFCS1e} is a nonconvex probabilistic constraint, generally making  problem \eqref{eq:JFCS1} NP-hard. In addition, the expectations in the constraints cause the stochastic nature of the problem, which cannot be solved directly. The classical optimization approaches, such as successive convex approximation (SCA) \cite{razaviyayn2014successive}, are often applied to solve the optimization problems of nonconvex and deterministic constraints. However, the stochastic SCA-based algorithms can no longer guarantee a feasible and (sub)-optimal solution of all  subsequent time intervals (TTIs)  due to  the dynamics of the physical layer at  small timescales. The  flow-split decisions  mainly rely on the previous states updated by the RAN layer.
Towards practical applications, an efficient and adaptive solution to the long-term subproblem of \eqref{eq:JFCS1} is necessary to achieve high QoE for all UEs in every TTI.  

Let us start by transforming problem \eqref{eq:JFCS1} into a more tractable form. Towards a safe design, we consider the replacement of constraint \eqref{eq:JFCS1e} by its deterministic constraint. From the basic property of probability, we can rewrite \eqref{eq:JFCS1e} as $\Pro\bigl(q_k^{i,j}[t_s] \geq \bar{A}_k\bar{d}_k\bigr) \leq 1-\epsilon_k$. It follows from the well-known Markov inequality \cite{BillingsleyProbability} that $\Pro\bigl(q_k^{i,j}[t_s] \geq \bar{A}_k\bar{d}_k\bigr) \leq \mathbb{E}\{q_k^{i,j}[t_s]\}/\bar{A}_k\bar{d}_k$, yielding
\begin{align}\label{eq:JFCS1e_Relaxed}
  &\sum\nolimits_{\ell=1}^{t}\beta_k^{i,j}[\ell]\bar{A}_k\tau - (1-\epsilon_k)\bar{A}_k\bar{d}_k - \sum\nolimits_{\ell=1}^{t_{s-1}}r_{k}^{i,j}(\mathbf{w}[\ell])\tau\nonumber\\
  &  \leq r_{k}^{i,j}(\mathbf{w}[t_s])\tau,\ \forall t_s, k\in\mathcal{K}, (i,j)\in\mathcal{P}_k
\end{align}
where each queue-length is always non-negative. We note that  \eqref{eq:JFCS1e_Relaxed} is a relaxed constraint of \eqref{eq:JFCS1e}, which implies that any feasible of the former is also feasible for the latter but not vice versa due to the Markov upper bound on the outage probabilities.

To facilitate the following optimization, we introduce congestion control variables $\boldsymbol{a}[t_s]\triangleq\bigl[a_k[t_s]\bigr]_{k\in\mathcal{K}}^{\sfT}$, satisfying $\bar{a}_k-\bar{r}_k\leq 0, \forall k$, where $\bar{a}_k\triangleq \underset{t_s\to\infty}{\lim}\frac{1}{t_s}\sum_{\ell=1}^{t_s} a_k[\ell]$. Problem \eqref{eq:JFCS1} is then rewritten as
\begin{subequations} \label{eq:JFCS2}
	\begin{IEEEeqnarray}{cl}
		\ &\underset{\boldsymbol{\beta}, \bar{\boldsymbol{a}}, \bar{\mathbf{r}}, \mathbf{w}}{\mathrm{max}} \   \sum_{k\in\mathcal{K}}U_k(\bar{a}_k)\label{eq:JFCS2a} \\
		& \st  \quad\  \eqref{eq:JFCS1b}, \eqref{eq:JFCS1c},     \eqref{eq:JFCS1d},   \eqref{eq:JFCS1e_Relaxed}               \label{eq:JFCS2b}\\
		     & \qquad\quad  \bar{a}_k-\bar{r}_k\leq 0, \forall k.  \label{eq:JFCS2c} 
	\end{IEEEeqnarray}
\end{subequations}
We also introduce a new auxiliary queue-length vector $\hat{\mathbf{q}}[t_s]\triangleq\bigl[\hat{q}_k[t_s] \bigr]^{\sfT}_{k\in\mathcal{K}}$, where
$\hat{q}_k[t_{s+1}] = \bigl[\hat{q}_k[t_s] + a_k[t_s]\tau - r_k(\mathbf{w}[t_s])\tau \bigr]^+$ to associate constraint \eqref{eq:JFCS2c} with a penalty function and $a_k[t_s]\in[0,A^{\max}]$. We define the total queue backlog of all UEs in time-slot $t_s$ as  $L[t_s] = \frac{1}{2}\bigl(\sum_{k\in\mathcal{K}}$ $\sum_{(i,j)\in\mathcal{P}_k} \frac{q_k^{i,j}[t_s]^2}{\tau^2} + \sum_{k\in\mathcal{K}}\frac{\hat{q}_k[t_s]^2}{\tau^2}\bigr)$, which is the quadratic Lyapunov function \cite{neely2010stochastic,TassiulasTAC92}. For given $(\mathbf{q}[t_s],\hat{\mathbf{q}}[t_s])$, the Lyapunov drift from  time-slot $t_s$ to $t_{s+1}$ is given as $\Delta L[t_s] = L[t_{s+1}] - L[t_s]$.  To guarantee joint network stability and penalty minimization  (\textit{i.e.}, \eqref{eq:JFCS1b} and \eqref{eq:JFCS2c} hold true), we adopt the drift-plus-penalty procedure \cite{neely2010stochastic} to minimize  the drift of a quadratic Lyapunov function and rewrite \eqref{eq:JFCS2} as
\begin{subequations} \label{eq:JFCS3}
	\begin{IEEEeqnarray}{cl}
		\ &\underset{\boldsymbol{\beta}, \bar{\boldsymbol{a}}, \bar{\mathbf{r}}, \mathbf{w}}{\mathrm{max}} \quad   \varphi\sum_{k\in\mathcal{K}}\mathbb{E}\{U_k(a_k[t_s])\} 
		- \mathbb{E}\{\Delta L[t_s]\}
		\label{eq:JFCS3a} \\
		& \st  \quad\   \eqref{eq:JFCS1c},     \eqref{eq:JFCS1d},   \eqref{eq:JFCS1e_Relaxed}               \label{eq:JFCS3b}
	\end{IEEEeqnarray}
\end{subequations}
where $\varphi$ is a scaling factor to balance two objective functions. 
 We now show that constraint \eqref{eq:JFCS2c} holds with equality at optimum by introducing the following lemma.
\begin{lemma}\label{lem_1}
 For each data-flow of UE $k$, the optimal congestion control rate  is  equal to the optimal long-term average service rate, \textit{i.e.}, $\bar{a}_k^{*}-\bar{r}_k^{*}= 0, \forall k.$
\end{lemma}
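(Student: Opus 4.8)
The plan is to exploit the strict monotonicity of each utility function asserted in Assumption~\ref{assp:1}, together with the structural fact that the congestion-control averages $\bar{a}_k$ appear nowhere in problem~\eqref{eq:JFCS2} except in the objective \eqref{eq:JFCS2a} and the coupling constraint \eqref{eq:JFCS2c}. First I would verify this decoupling by inspection: the queue-stability constraint \eqref{eq:JFCS1b}, the rate-region constraint \eqref{eq:JFCS1c}, the flow-split constraint \eqref{eq:JFCS1d} and the relaxed delay constraint \eqref{eq:JFCS1e_Relaxed} are all functions of $(\boldsymbol{\beta},\bar{\mathbf{r}},\mathbf{w})$ only, so for any fixed feasible choice of these variables the sole restriction on $\bar{a}_k$ is $0\le\bar{a}_k\le\bar{r}_k$.

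Given this, I would argue by contradiction. Suppose an optimal tuple $(\boldsymbol{\beta}^{*},\bar{\boldsymbol{a}}^{*},\bar{\mathbf{r}}^{*},\mathbf{w}^{*})$ has strict slack $\bar{a}_{k_0}^{*}<\bar{r}_{k_0}^{*}$ for some $k_0\in\mathcal{K}$. Pick $\delta>0$ with $\bar{a}_{k_0}^{*}+\delta\le\bar{r}_{k_0}^{*}$ and form a new candidate that raises this single average to $\bar{a}_{k_0}^{*}+\delta$ while keeping every other variable fixed. This point still satisfies \eqref{eq:JFCS2c} by construction and leaves the $\bar{a}$-independent constraints \eqref{eq:JFCS1b}--\eqref{eq:JFCS1e_Relaxed} untouched, hence it is feasible; moreover, because $U_{k_0}$ is strictly increasing, $U_{k_0}(\bar{a}_{k_0}^{*}+\delta)>U_{k_0}(\bar{a}_{k_0}^{*})$, which strictly increases the objective \eqref{eq:JFCS2a} and contradicts optimality. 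Therefore no slack can persist at the optimum and $\bar{a}_k^{*}=\bar{r}_k^{*}$ for all $k$.

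The step I expect to need the most care is the \emph{realizability} of the perturbation at the level of the per-slot control sequence rather than merely its long-term average: I must exhibit an admissible sequence $\{a_{k_0}[t_s]\}\subset[0,A^{\max}]$ whose running average equals $\bar{a}_{k_0}^{*}+\delta$, which is legitimate provided the target rate stays within the admission range $[0,A^{\max}]$ and does not perturb the auxiliary and physical queues in a way that would jeopardise the stability constraint \eqref{eq:JFCS1b}. Since \eqref{eq:JFCS1b} and \eqref{eq:JFCS1e_Relaxed} are driven by the service rates $r_k(\mathbf{w})$ and the exogenous arrivals, not by the admitted rates $a_k$, this decoupling indeed holds and the construction is valid. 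As a self-contained alternative, I could instead invoke the KKT conditions of \eqref{eq:JFCS2}: attaching a multiplier $\lambda_k\ge0$ to \eqref{eq:JFCS2c}, stationarity in $\bar{a}_k$ reads $U_k'(\bar{a}_k^{*})=\lambda_k$, and since $U_k'>0$ strictly by Assumption~\ref{assp:1} we get $\lambda_k>0$, so complementary slackness forces \eqref{eq:JFCS2c} to be active, i.e.\ $\bar{a}_k^{*}=\bar{r}_k^{*}$.
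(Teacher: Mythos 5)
Your proposal is correct, and your closing ``alternative'' is in fact the paper's entire proof: the paper dispatches Lemma~\ref{lem_1} in one sentence by invoking KKT complementary slackness together with the increasing, strictly concave $U_k(\cdot)$ of Assumption~\ref{assp:1}, exactly as in your last paragraph. Your primary argument is a genuinely different and more elementary route: a direct exchange/perturbation argument resting on the decoupling you verify by inspection --- that $\bar{\boldsymbol{a}}$ enters problem~\eqref{eq:JFCS2} only through the objective \eqref{eq:JFCS2a} and the coupling constraint \eqref{eq:JFCS2c}, while \eqref{eq:JFCS1b}--\eqref{eq:JFCS1e_Relaxed} depend only on $(\boldsymbol{\beta},\bar{\mathbf{r}},\mathbf{w})$. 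What this buys you is self-containedness: for a stochastic, infinite-horizon program whose feasible set is defined through limits and expectations, rigorously justifying the existence of multipliers and stationarity conditions is itself nontrivial, whereas your contradiction argument needs only feasibility of the perturbed point and strict monotonicity of $U_{k_0}$ below $\bar{r}^{\max}$ (which follows from ``increasing plus strictly concave'' even if ``increasing'' is read weakly, since $\bar{a}_{k_0}^{*}<\bar{r}_{k_0}^{*}\leq\bar{r}^{\max}$ forces $U_{k_0}'(\bar{a}_{k_0}^{*})>0$). What the paper's KKT line buys is brevity. One shared caveat, which you flag but neither you nor the paper fully resolves: the per-slot admission cap $a_k[t_s]\in[0,A^{\max}]$ means the perturbation saturates if $\bar{r}_{k_0}^{*}>A^{\max}$ (correspondingly, the paper's stationarity condition would pick up an extra multiplier for the cap, so complementary slackness alone would not force \eqref{eq:JFCS2c} active); both arguments therefore implicitly assume the optimal service average does not exceed $A^{\max}$, which is natural here since arrivals themselves are bounded by $A^{\max}$, but it is to your credit that you state the proviso explicitly rather than leaving it silent.
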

 \noindent The proof Lemma \ref{lem_1} is straightforward by examining the Karush–Kuhn–Tucker (KKT) complementary slackness condition over the increasing and strictly concave objective function  $U_k(\cdot), \forall k$.

\subsection{Overall Intelligent Resource Management Algorithm}
\begin{algorithm}[t]
	\begin{algorithmic}[1]
 		\fontsize{9}{9}\selectfont
		\protect\caption{Intelligent Resource Management Algorithm for Solving JFCS Problem \eqref{eq:JFCS1},  compliant with O-RAN}		\label{alg_JFCS}
       \global\long\def\algorithmicrequire{\textbf{Initialization:}}	
      \REQUIRE Set $t=1$ and select a positive scaling  factor $\varphi$. Initialize $\boldsymbol{\beta}_k[1] = \frac{1}{|\mathcal{P}_k|}[1,\cdots,1]$ and all queues are set to be empty: $q_k^{i,j}[1_1]=0$ and $\hat{q}_k[1_1]=0, \forall (i,j), k$.
      
		\global\long\def\algorithmicrequire{\textbf{Main Loop:}}
		\REQUIRE 
		\FOR[/*\textit{Long-term  scale $t$}*/]{each frame $t=1,2,\cdots,T$}
	   \STATE \textbf{Flow-Split Distribution:} Given $\{\mathbf{q}[t-1],\mathbf{A}[t-1]\}$, CU splits data-flows of all UEs  based on the optimal flow-split decisions $\boldsymbol{\beta}^*[t]$ by solving L-SP at Non-RT RIC:
	     \[  \underset{\boldsymbol{\beta}_k[t]\in \mathscr{B}[t],\forall k}{\mathrm{max}} \   \sum_{k\in\mathcal{K}}\mathscr{L}_k[t].\]
	   \FOR[/*\textit{Short-term  scale $t_s$}*/]{each time-slot $t_s = tT_f + s$ with $s\in\{1,\cdots,T_f\}$}
	    
	    \STATE \textbf{Congestion Controller:} Given the queue-length vector $\hat{\mathbf{q}}[t_s]$, solve S-SP1 \eqref{eq:JFCSSSP1} to obtain the  optimal  congestion  control variables:
	    \[a_k^*[t_s] = \min\Bigl\{ U_k^{'-1}\bigl(\frac{\hat{q}_k[t_s]}{\varphi\tau} \bigr),A^{\max}\Bigr\}, \forall k.\]
	    
	    \STATE \textbf{Weighted Queue-Length-Based Scheduler:} Given the queue-length vector $\hat{\mathbf{q}}[t_s]$ and  the flow-split distribution $\boldsymbol{\beta}^*[t]$, each RU $(i,j)\in\mathcal{P}_{k}$ schedules the service rate $r_{k}^{i,j}(\mathbf{w}[t_s])$ for UE $k\in\mathcal{K}$ by solving S-SP2:
	    \[\underset{\mathbf{r}[t_s],\mathbf{w}[t_s]}{\mathrm{max}} \quad   \sum_{k\in\mathcal{K}} \frac{\hat{q}_k[t_s]}{\tau}r_k(\mathbf{w}[t_s]), \ \st\   \eqref{eq:JFCS1c},       \eqref{eq:JFCS1e_Relaxed}.\]
	    
	    \STATE \textbf{Queue-Length Updates:} Queue-Lengths are updated as
	       \begin{align}
	          q_k^{i,j}[t_{s+1}] &= \bigl[q_k^{i,j}[t_s] + \beta_k^{i,j}[t]A_k[t]\tau \nonumber\\
	          &\qquad\qquad - r_{k}^{i,j}(\mathbf{w}[t_s])\tau \bigr]^+, \ \forall k, (i,j)  \nonumber\\
	          \hat{q}_k[t_{s+1}] &= \bigl[\hat{q}_k[t_s] + a_k[t_s]\tau - r_k(\mathbf{w}[t_s])\tau \bigr]^+,\ \forall k\nonumber.
	       \end{align}
	    \STATE Set $s=s+1$
	    \ENDFOR
          \STATE Update $\{\mathbf{q}[t],\mathbf{A}[t]\}:= \{q_k^{i,j}[t], A_k[t]\}_{k, (i,j)}$ to Non-RT RIC.
           \STATE Set $t=t+1$
        \ENDFOR
\end{algorithmic}
\end{algorithm}
To solve problem \eqref{eq:JFCS3} in different time scales, we now decompose it into three subproblems. To do so, we consider a worst-case design by developing an upper bound of $\Delta L[t_s]$ for given $(\mathbf{q}[t_s],\hat{\mathbf{q}}[t_s])$. From the inequality $([x]^+)^2 \leq x^2$ and $(x+y)^2-x^2 = 2xy + y^2$, we have
\begin{align}
   & \Delta L^{\UB}[t_s] \triangleq \sum_{k\in\mathcal{K}}\sum_{(i,j)\in\mathcal{P}_k} \frac{q_k^{i,j}[t_s]}{\tau}\bigl(\beta_k^{i,j}[t]A_k[t] - r_{k}^{i,j}(\mathbf{w}[t_s]) \bigr) \nonumber\\
    &\quad+ \sum_{k\in\mathcal{K}}\frac{\hat{q}_k[t_s]}{\tau}\bigl(a_k[t_s] - r_k(\mathbf{w}[t_s])\bigr) + B[t_s] \geq \Delta L[t_s]
\end{align}
where $B[t_s]\triangleq \frac{1}{2}\sum_{k\in\mathcal{K}}\sum_{(i,j)\in\mathcal{P}_k} \bigl(\beta_k^{i,j}[t]A_k[t] - r_{k}^{i,j}(\mathbf{w}[t_s]) \bigr)^2+\frac{1}{2}\sum_{k\in\mathcal{K}}\bigl(a_k[t_s] - r_k(\mathbf{w}[t_s])\bigr)^2$ is the summation of the second moments of the arrival and service processes. Following \cite{neely2010stochastic} and \cite{VuTWC2019}, we consider that $B[t_s]$ is finite and bounded by $\bar{B}$ for all $t_s$, \textit{i.e.}, $\mathbb{E}\{B[t_s]\big|\mathbf{q}[t_s],\hat{\mathbf{q}}[t_s]\}$ $\leq \bar{B}$. As a result, problem \eqref{eq:JFCS3} is simplified to
\begin{subequations} \label{eq:JFCS4}
	\begin{IEEEeqnarray}{cl}
		\ &\underset{\boldsymbol{\beta}, \bar{\boldsymbol{a}}, \bar{\mathbf{r}}, \mathbf{w}}{\mathrm{max}} \quad   \varphi\sum_{k\in\mathcal{K}}\mathbb{E}\{U_k(a_k[t_s])\} 
		- \mathbb{E}\{\Delta L^{\UB}[t_s]\}
		\label{eq:JFCS4a} \\
		& \st  \quad\   \eqref{eq:JFCS1c},     \eqref{eq:JFCS1d},   \eqref{eq:JFCS1e_Relaxed}.               \label{eq:JFCS4b}
	\end{IEEEeqnarray}
\end{subequations}

\textbf{Long-term subproblem (L-SP):} The flow-split distribution subproblem at time-frame $t$ is given as 
\begin{align}\label{eq:JFCSLSP}
	\textbf{L-SP}:\	\underset{\boldsymbol{\beta}_k[t]\in \mathscr{B}[t],\forall k}{\mathrm{max}} \quad   \sum_{k\in\mathcal{K}}\mathscr{L}_k[t]               
\end{align}
where $\mathscr{L}_k[t] = \sum_{(i,j)\in\mathcal{P}_k} \frac{q_k^{i,j}[t_s]}{\tau}\bigl(r_{k}^{i,j}(\mathbf{w}[t_s]) - \beta_k^{i,j}[t]A_k[t]\bigr)$. Although problem \eqref{eq:JFCSLSP} is a linear program in $\boldsymbol{\beta}$, it cannot be solved directly by standard optimization techniques because  $A_k[t],\forall k$ are \textit{incompletely} known at the beginning of time-frame $t$.

\textbf{Short-term subproblems (S-SPs):} The   congestion  control subproblem   at time-slot $t_s$ is 
\begin{align}\label{eq:JFCSSSP1}
	\textbf{S-SP1}:\	\underset{\boldsymbol{a}[t_s]\geq 0}{\mathrm{max}} \quad   \sum_{k\in\mathcal{K}}\bigl(\varphi U_k(a_k[t_s]) - \frac{\hat{q}_k[t_s]}{\tau}a_k[t_s]\bigr)      
\end{align}
which is an unconstrained convex problem. The optimal solution of \eqref{eq:JFCSSSP1} exists and is unique that is $a_k^*[t_s] = U_k^{'-1}\bigl(\frac{\hat{q}_k[t_s]}{\varphi\tau} \bigr), \forall k$, where $U_k^{'-1}(\cdot)$ denotes the inverse function of the first derivation of $U_k(\cdot)$. Given the optimal solution $\boldsymbol{\beta}^*[t]$,
 the short-term  power control optimization subproblem (\textit{i.e.}, the weighted queue-length-based scheduling) at time-slot $t_s$ is given as
\begin{align}\label{eq:JFCSSSP2}
	\textbf{S-SP2}:	\underset{\mathbf{r}[t_s],\mathbf{w}[t_s]}{\mathrm{max}} \quad   \sum_{k\in\mathcal{K}} \frac{\hat{q}_k[t_s]}{\tau}r_k(\mathbf{w}[t_s]), \ \st\   \eqref{eq:JFCS1c},       \eqref{eq:JFCS1e_Relaxed}.             
\end{align}
The overall intelligent resource management algorithm for solving the JFCS problem \eqref{eq:JFCS1} is summarized in Algorithm \ref{alg_JFCS}, where the solutions of subproblems will be provided next.

%%%%%%%%%%%%%%%%%%%%%%%%%%%%%%%%%%%%%%%%%%%%%%%%%%%%%%%%%%%%%%%%%%%
\section{Proposed Algorithms for Solving Subproblems}\label{sec_RAAlgo}
We are now in a position to solve L-SP \eqref{eq:JFCSLSP} and S-SP2 \eqref{eq:JFCSSSP2} in different time scales. The optimality of the latter  depends heavily  on the optimal flow-split decisions, which often require a prior knowledge of the statistical information of all possible paths at Non-RT RIC. However, the assumption of complete information is unrealistic due to the dynamic environment and the data collected from the RAN layer being only updated to Non-RT RIC only on the long-term scale. In this work, at time-frame $t$ we aim to exploit historical system information  accumulated from the previous time-slot, which can be used to build the smoothed best response in maximizing the long-term utility for each data flow.

{\color{black}
\subsection{Reinforcement Learning Algorithm for Solving L-SP \eqref{eq:JFCSLSP}}
The flow-split decision $\boldsymbol{\beta}_k[t]$ in problem \eqref{eq:JFCSLSP} can be estimated separably by minimizing $\mathscr{L}_k[t]$. This implies that the larger the queue-length $q_k^{i,j}[t_s]$, the lower the flow-split decision value $\beta_k^{i,j}[t]$ to guarantee fairness among all RUs $(i,j)\in\mathcal{P}_k$ (\textit{i.e.}, to avoid large queue-lengths $q_k^{i,j}$ at some RUs in the next time-slot $t_{s+1}$).
Let us denote  
\[u_k^{i,j}[t]\triangleq  \frac{q_k^{i,j}[t_s]}{\tau}\bigl(r_{k}^{i,j}(\mathbf{w}[t_s]) - \beta_k^{i,j}[t]A_k[t]\bigr)\]
as the instantaneous utility observation of data-flow $k$ at time-frame $t$ when selecting path $(i,j)\in\mathcal{P}_k$. The total utility observation of data-flow $k$, denoted by $ u_k[t]$,  is thus 
\begin{equation}
u_k[t] = \sum_{(i,j)\in\mathcal{P}_k}u_k^{i,j}[t].
\end{equation}
However, it is unable to build a smoothed best response based on $u_k^{i,j}[t]$ as it is not revealed  at the beginning of time-frame $t$. 
Inspired by \cite{BennisTWC2013}, we denote $\hat{u}_k^{i,j}[t]$ as the estimated utility of data-flow $k$ at time-frame $t$ when selecting path $(i,j)$. In addition, the actual utility observed by data-flow $k$ at time-frame $t$, denoted by $\bar{u}_k[t]$, is given as $\bar{u}_k[t] = u_k[t-1]$, which is based on feedback from Near-RT RIC at time $t-1$. By initializing $\hat{u}_k^{i,j}[1] = 0$, the estimated utility of data-flow $k$ is updated for action $\mathbf{c}_k[t]= c_k^{i,j}[t]$ as follows:
\begin{align}\label{eq_utilityestimate}
    \hat{u}_k^{i,j}[t] =&\ \hat{u}_k^{i,j}[t-1]+ \eta_u[t]\mathbbm{1}_{\{\mathbf{c}_k[t] = c_k^{i,j}[t]\}}\bigl(\bar{u}_k[t] \nonumber\\
    &\qquad\qquad\qquad\ - \hat{u}_k^{i,j}[t-1]\bigr), \ \forall t>1
\end{align}
 where $\eta_u > 0$ is the decreasing step size (i.e. the learning rate), which is often decreased over time to guarantee convergence. Naturally, $\hat{u}_k^{i,j}[1]$ is initialized as $\hat{u}_k^{i,j}[1] = 0$ for $t=1$. The indicator function $\mathbbm{1}_{\{x=y\}}= 1$ (resp. 0) if the condition $x=y$ is true (resp. false).

Next, we denote $\hat{\boldsymbol{\theta}}_k[t]\triangleq[\hat{\theta}_k^{i,j}[t]]_{(i,j)\in\mathcal{P}_k}$ as the estimated regret vector of data-flow $k$, where each element is updated  for action $\mathbf{c}_k[t]= c_k^{i,j}[t]$ as
\begin{align}\label{eq_regretestimate}
    \hat{\theta}_k^{i,j}[t] =& \hat{\theta}_k^{i,j}[t-1] + \eta_{\theta}[t]\mathbbm{1}_{\{\mathbf{c}_k[t] = c_k^{i,j}[t]\}}\bigl(\bar{u}_k[t] \nonumber\\
    &\qquad\qquad- \hat{u}_k^{i,j}[t]-  \hat{\theta}_k^{i,j}[t-1]\bigr),\ \forall t>1
\end{align}
with $\hat{\theta}_k^{i,j}[1] = 0$ and $\eta_{\theta}[t]$ being the learning rate.  In order to achieve high  performance in the long term, the L-SP must balance  exploration and exploitation processes. 
We note that trying all possible actions  to choose the best paths (\textit{e.g.} the exhaustive exploration) can offer the highest payoff, but with the cost of slow convergence and even computationally prohibitive. During the exploitation process, playing an action associated with the highest estimated utility in \eqref{eq_utilityestimate} will likely result in a very sub-optimal solution. To make this tradeoff more efficient, let us define the best response function $\hat{\boldsymbol{\beta}}[t] = f(\hat{\boldsymbol{\theta}}[t])$ as
\begin{IEEEeqnarray}{rCl}\label{eq_bestresponse}\small
    f(\hat{\boldsymbol{\theta}}[t])  := \underset{\boldsymbol{\beta}_k[t]\in \mathscr{B}[t]}{\argmin}\Bigr\{h\bigr(\boldsymbol{\beta}[t]\bigl) - \lambda\sum_{k\in\mathcal{K}}\sum_{(i,j)\in\mathcal{P}_k}\beta_k^{i,j}[t]\hat{\theta}_k^{i,j}[t]\Bigl\}.\qquad
\end{IEEEeqnarray}
Here $\lambda$ is the so-called trade-off factor (a.k.a. Boltzmann temperature) and $h\bigr(\boldsymbol{\beta}[t]\bigl)$ denotes the regularization function. We note that when $\lambda \rightarrow 0$, it leads to uniform probabilities of all actions, \textit{i.e.} $\beta_k^{i,j}[t]=1/|\mathcal{P}_k|,\forall (i,j)\in\mathcal{P}_k$. For   $\lambda \rightarrow \infty$, the second term in \eqref{eq_bestresponse} will dominate the best response function and then the actions associated with the highest estimated regret will be selected \cite{BennisTWC2013}.

\textbf{Regularization function:} The regularization function allows to learn the best paths that maximize its own performance and stabilize the flow-split decisions. The solutions to problem \eqref{eq:JFCSLSP} lie in the unit simplex for each data-flow. Therefore, we adopt the Gibbs-Shannon entropy as the regularization function, \textit{i.e.} $h\bigr(\boldsymbol{\beta}[t]\bigl) = \sum_{k\in\mathcal{K}}\sum_{(i,j)\in\mathcal{P}_k}\beta_k^{i,j}[t]\ln\bigl( \beta_k^{i,j}[t]\bigr)$, which is $K$-strongly convex.
Substituting $h\bigr(\boldsymbol{\beta}[t]\bigl)$ into \eqref{eq_bestresponse}, we have
\begin{align}\label{eq_bestresponse2}
    f(\hat{\boldsymbol{\theta}}[t])  :=& \underset{\boldsymbol{\beta}_k[t]\in \mathscr{B}[t], \forall k}{\argmin}\Bigr\{\sum_{k\in\mathcal{K}}\sum_{(i,j)\in\mathcal{P}_k}\beta_k^{i,j}[t]\ln\bigl( \beta_k^{i,j}[t]\bigr) \nonumber\\
    &\qquad\qquad\quad - \lambda\sum_{k\in\mathcal{K}}\sum_{(i,j)\in\mathcal{P}_k}\beta_k^{i,j}[t]\hat{\theta}_k^{i,j}[t]\Bigl\}.
\end{align}
The function $f(\hat{\boldsymbol{\theta}}[t])$ is convex and separable for each $\beta_k^{i,j}[t]$. 
By solving the following equation
\begin{equation}\nonumber
\partial f(\hat{\boldsymbol{\theta}}[t])/\partial \beta_k^{i,j}[t]  = \ln\bigl( \beta_k^{i,j}[t]\bigr) + 1 - \lambda\hat{\theta}_k^{i,j}[t] = 0
\end{equation}
we have 
\[\beta_k^{i,j}[t] = f(\hat{\theta}_{k}^{i,j}[t]) = \exp{\bigl(\lambda\hat{\theta}_k^{i,j}[t]-1}\bigr).\]
To ensure $\sum_{(i,j)\in\mathcal{P}_k }\beta_k^{i,j}[t]=1, \forall k$ (\textit{i.e.} the unit simplex for data-flow $k$), we normalize $f_{k}^{i,j}(\hat{\boldsymbol{\theta}}_{k}[t])$ through the exponentiated mirror function as
\begin{align}
     f_{k}^{i,j}(\hat{\boldsymbol{\theta}}_{k}[t]) &= \frac{\exp{\bigl(\bigr[\lambda\hat{\theta}_k^{i,j}[t] -1\bigl]^+}\bigr)}{\sum_{(i',j')\in\mathcal{P}_k}\exp{\bigl(\bigr[\lambda\hat{\theta}_k^{i',j'}[t]-1\bigl]^+}\bigr)} \nonumber\\
     &= \frac{\exp{\bigl(\lambda\bigr[\hat{\theta}_k^{i,j}[t]\bigl]^+}\bigr)}{\sum_{(i',j')\in\mathcal{P}_k}\exp{\bigl(\lambda\bigr[\hat{\theta}_k^{i',j'}[t]\bigl]^+}\bigr)}.
\end{align}
As a result, the estimated value of each element of flow-split vector $\boldsymbol{\beta}_k[t]$ is updated for all actions with the regret as
\begin{align}\label{eq_betaestimate}
    \beta_k^{i,j}[t] =\beta_k^{i,j}[t-1] + \eta_{\beta}[t]\bigl(f_{k}^{i,j}(\hat{\boldsymbol{\theta}}_{k}[t])-\beta_k^{i,j}[t-1]\bigr)
\end{align}
for $t>1$, where $\boldsymbol{\beta}_k[1] = \frac{1}{|\mathcal{P}_k|}[1,\cdots,1]$ and $\eta_{\beta}[t]$ is the learning rate. The three-step reinforcement learning procedure includes \eqref{eq_utilityestimate}, \eqref{eq_regretestimate} and \eqref{eq_betaestimate}, which do not require expensive computations and projection to the feasible space.

\textbf{Convergence properties}:
The convergence conditions for the three-step reinforcement learning procedure are given as follows:
\begin{align}\label{eq_learningcondi}
  &\underset{T\to\infty}{\lim}\sum\nolimits_{t=1}^{T}\eta_u[t] = +\infty \ \&\ \underset{T\to\infty}{\lim}\sum\nolimits_{t=1}^{T}\eta_u^2[t] < +\infty\nonumber\\
  &\underset{T\to\infty}{\lim}\sum\nolimits_{t=1}^{T}\eta_\theta[t] = +\infty \ \&\ \underset{T\to\infty}{\lim}\sum\nolimits_{t=1}^{T}\eta_{\theta}^2[t] < +\infty\nonumber\\
  &\underset{T\to\infty}{\lim}\sum\nolimits_{t=1}^{T}\eta_{\beta}[t] = +\infty \ \&\ \underset{T\to\infty}{\lim}\sum\nolimits_{t=1}^{T}\eta_{\beta}^2[t] < +\infty\nonumber\\
  &\underset{t\to\infty}{\lim}\frac{\eta_\theta[t]}{\eta_u[t]} = 0 \ \&\ \underset{t\to\infty}{\lim}\frac{\eta_{\beta}[t]}{\eta_\theta[t]} = 0.
\end{align}
This implies that the learning rates must be decreased over time to guarantee the convergence of the proposed  three-step RL procedure. The detailed proof of a multiple-timescales RL algorithm can be found in \cite{BennisTWC2013,leslie2003convergent}. Following the same arguments as those in \cite{BennisTWC2013} and  the conditions in \eqref{eq_learningcondi}, the three-step RL procedure in \eqref{eq_utilityestimate}, \eqref{eq_regretestimate} and \eqref{eq_betaestimate} converges to the optimal solution with the positive trade-off factor $\lambda >0$, satisfying $\underset{t\to\infty}{\lim} \boldsymbol{\beta}_k[t] = \boldsymbol{\beta}_k^*,\ \forall k\in\mathcal{K}$.
}

\subsection{Proposed Algorithm for Solving  S-SP2 \eqref{eq:JFCSSSP2}}
 Given the optimal flow-split distribution of data-flow $k$, $\boldsymbol{\beta}_k^*[t]$, we denote by  $\mathcal{P}_k^*[t]$ the set of selected path states in time-frame $t$, which only includes $c_k^{i,j}[t]=1$ with $(i,j)\in\mathcal{P}_k$. 
In this section, we present two low-complexity transmission designs for $\mathbf{w}$, \textit{namely} maximum ratio transmission (MRT) and zero-forcing beamforming (ZFBF), and then develop low-complexity iterative algorithms for their solution.

\subsubsection{MRT-Based Transmission Design}
Each RU $(i,j)$ performs MRT beamforming (a.k.a. channel-mathched beamforming) using local CSI as $\mathbf{w}_{k}^{i,j}[t_s] = \frac{\sqrt{p_{k}^{i,j}[t_s]}}{\sqrt{\nu_{k}^{i,j}[t_s]}}\mathbf{h}_{k}^{i,j}[t_s],\ \forall (i,j)\in\mathcal{P}_k[t]$ and $k\in\mathcal{K}$, where $\nu_{k}^{i,j}[t_s] \triangleq \|\mathbf{h}_{k}^{i,j}[t_s]\|_2^2$ and $p_{k}^{i,j}[t_s]$ is the transmit power coefficient allocated to UE $k$ from RU $(i,j)$ in time-slot $t_s$. The corresponding SINR is rewritten as
\begin{align}
    \gamma_{k}^{i,j}(\mathbf{p}[t_s])=\frac{p_{k}^{i,j}[t_s]\nu_{k}^{i,j}[t_s]}{\Phi_k^{i,j}(\mathbf{p}[t_s])}
\end{align}
where $\Phi_k^{i,j}(\mathbf{p}[t_s]) \triangleq \sum_{(i',j')\in\mathcal{P}_k[t]\setminus\{(i,j)\}}p_{k}^{i',j'}[t_s]\nu_{k}^{i',j'}[t_s] +  \sum_{k'\in\mathcal{K}\setminus\{k\}}\sum_{(i,j)\in\mathcal{P}_{k'}[t]}p_{k'}^{i,j}[t_s]\frac{|(\mathbf{h}_{k}^{i,j}[t_s])^{\sfH}\mathbf{h}_{k'}^{i,j}[t_s]|^2}{\nu_{k'}^{i,j}[t_s]} +N_0$ is linear in $\mathbf{p}[t_s]\triangleq \bigl[p_{k}^{i,j}[t_s]\bigr]_{k\in\mathcal{K},(i,j)\in\mathcal{P}_k}$.
As a result, the  short-term power optimization problem \eqref{eq:JFCSSSP2} with MRT reduces to the following problem: 
\begin{subequations}\label{eq:JFCSSSP2_Equi}
\begin{IEEEeqnarray}{cl}
		\underset{\mathbf{p}[t_s]}{\mathrm{max}}& \quad   \sum_{k\in\mathcal{K}} \frac{\hat{q}_k[t_s]}{\tau}r_k(\mathbf{p}[t_s])\label{eq:JFCSSSP2_Equia}\\
		 \st&\quad  \bar{R}_k^{i,j}[t_s] \leq r_{k}^{i,j}(\mathbf{p}[t_s])\tau,\ \forall k, (i,j)\label{eq:JFCSSSP2_Equib}\\
		 &\quad \sum_{k\in\mathcal{K}}p_{k}^{i,j}[t_s] \leq P_{\max}^{i,j},\ \forall (i,j)\label{eq:JFCSSSP2_Equic}
\end{IEEEeqnarray}
\end{subequations}
where $r_k(\mathbf{p}[t_s]) = \sum_{(i,j)\in\mathcal{P}_k^*}r_{k}^{i,j }(\mathbf{p}[t_s])$ with $r_{k}^{i,j}(\mathbf{p}[t_s])\triangleq W\log_2\bigl(1 + \gamma_{k}^{i,j}(\mathbf{p}[t_s])\bigr)$, and $\bar{R}_k^{i,j}[t_s]\triangleq \sum_{\ell=1}^{t}\beta_k^{i,j}[\ell]\bar{A}_k\tau - (1-\epsilon_k)\bar{A}_k\bar{d}_k - \sum_{\ell=1}^{t_{s-1}}r_{k}^{i,j}(\mathbf{p}[\ell])\tau.$ 

Problem \eqref{eq:JFCSSSP2_Equi} is nonconvex due to the nonconcavity of $r_{k}^{i,j }(\mathbf{p}[t_s])$. We will now apply the inner approximation (IA) method  to effectively solve \eqref{eq:JFCSSSP2_Equi} in an iterative manner. Following from  inequality \eqref{eq_IAapproConcave} in Appendix \ref{app:DerivationofInequ} with $v=p_{k}^{i,j}[t_s]\|\mathbf{h}_{k}^{i,j}[t_s]\|_2^2$ and $z=\Phi_k^{i,j}(\mathbf{p}[t_s])$, the global concave lower bound of $r_{k}^{i,j}(\mathbf{p}[t_s])$ at the updated feasible point $\mathbf{p}^{(n)}[t_s]$ found at iteration $n$, denoted by ${r}_{k}^{i,j(n)}(\mathbf{p}[t_s];{\mathbf{p}}^{(n)}[t_s])$, is given as
\begin{align}
    r_{k}^{i,j}(\mathbf{p}[t_s]) &\geq  r_{k}^{i,j}({\mathbf{p}}^{(n)}[t_s]) - W\log_2e\biggl[ \gamma_{k}^{i,j}({\mathbf{p}}^{(n)}[t_s]) \nonumber \\ 
    &\ -  2\frac{\nu_{k}^{i,j}[t_s]\sqrt{{p}_{k}^{i,j(n)}[t_s]}\sqrt{p_{k}^{i,j}[t_s]}}{\Phi_k^{i,j}({\mathbf{p}}^{(n)}[t_s])} + \gamma_{k}^{i,j}({\mathbf{p}}^{(n)}[t_s])\nonumber\\
    &\ \times\frac{p_{k}^{i,j}[t_s]\nu_{k}^{i,j}[t_s] + \Phi_k^{i,j}(\mathbf{p}[t_s]) }{{p}_{k}^{i,j(n)}[t_s]\nu_{k}^{i,j}[t_s] + \Phi_k^{i,j}({\mathbf{p}}^{(n)}[t_s])} \biggl]\nonumber\\
    &:={r}_{k}^{i,j(n)}(\mathbf{p}[t_s];{\mathbf{p}}^{(n)}[t_s])\label{eq_Concaveofrate}
\end{align}
with ${r}_{k}^{i,j(n)}({\mathbf{p}}^{(n)}[t_s];{\mathbf{p}}^{(n)}[t_s]) = W\log_2\bigl(1 + \gamma_{k}^{i,j}(\mathbf{p}^{(n)}[t_s])\bigr)$. As a result, we successively  solve the following inner convex approximate program of \eqref{eq:JFCSSSP2_Equi} at iteration $n$:
\begin{subequations}\label{eq:JFCSSSP2_Convex}
\begin{IEEEeqnarray}{cl}
		\underset{\mathbf{p}[t_s]}{\mathrm{max}}& \   \sum_{k\in\mathcal{K}} \frac{\hat{q}_k[t_s]}{\tau}{r}_k^{(n)}(\mathbf{p}[t_s])\label{eq:JFCSSSP2_Convexa}\\
		 \st&\  \bar{R}_k^{i,j}[t_s] \leq {r}_{k}^{i,j(n)}(\mathbf{p}[t_s];{\mathbf{p}}^{(n)}[t_s])\tau,\ \forall k\in\mathcal{K}, (i,j)\label{eq:JFCSSSP2_Convexb}\qquad\\
		 &\ \sum_{k\in\mathcal{K}}p_{k}^{i,j}[t_s] \leq P_{\max}^{i,j},\ \forall (i,j)\label{eq:JFCSSSP2_Convexc}
\end{IEEEeqnarray}
\end{subequations}
and  update the feasible point ${\mathbf{p}}^{(n)}[t_s]$ until convergence, where ${r}_k^{(n)}(\mathbf{p}[t_s])=\sum_{(i,j)\in\mathcal{P}_k^*}r_{k}^{i,j(n)}(\mathbf{p}[t_s];$ ${\mathbf{p}}^{(n)}[t_s])$. The proposed iterative procedure  to solve \eqref{eq:JFCSSSP2} is summarized in Algorithm \ref{alg_SSP2}. An initial feasible  value for $\mathbf{p}^{(0)}[t_s]$ to start Algorithm \ref{alg_SSP2} is easily found by successively solving the following simple convex program:
\begin{subequations}\label{eq:JFCSSSP2_ConvexInf}
\begin{IEEEeqnarray}{cl}
		\underset{\mathbf{p}[t_s]}{\mathrm{max}}& \  \varrho \triangleq \underset{\forall k,(i,j)}{\min}\bigl\{{r}_{k}^{i,j(n)}(\mathbf{p}[t_s];{\mathbf{p}}^{(n)}[t_s])\tau - \bar{R}_k^{i,j}[t_s]\bigr\}\qquad\label{eq:JFCSSSP2_ConvexInfa}\\
		 \st&\quad \sum_{k\in\mathcal{K}}p_{k}^{i,j}[t_s] \leq P_{\max}^{i,j},\ \forall (i,j)\label{eq:JFCSSSP2_ConvexInfb}
\end{IEEEeqnarray}
\end{subequations}
until reaching $\varrho > 0$.

\begin{algorithm}[t]
\begin{algorithmic}[1]
\fontsize{10}{10}\selectfont
\protect\caption{Proposed Iterative Algorithm for Solving  \eqref{eq:JFCSSSP2} with MRT-Based Transmission Design}
\label{alg_SSP2}
\global\long\def\algorithmicrequire{\textbf{Initialization:}}
\REQUIRE  Set $n:=1$ and  generate an initial feasible value for   $\mathbf{p}^{(0)}[t_s]$ to constraints in \eqref{eq:JFCSSSP2_Convex}
\REPEAT
\STATE Solve  \eqref{eq:JFCSSSP2_Convex} to obtain the optimal transmission power $\mathbf{p}^{*}[t_s]$

\STATE Update\ \ ${\mathbf{p}}^{(n)}[t_s] := \mathbf{p}^{*}[t_s]$
\STATE Set $n:=n+1$
\UNTIL Convergence\\
\STATE{\textbf{Output:}} $\mathbf{p}^{*}[t_s]={\mathbf{p}}^{(n)}[t_s]$ and $\mathbf{w}_{k}^{i,j,*}[t_s] = \frac{\sqrt{p_{k}^{i,j*}[t_s]}}{\sqrt{\nu_{k}^{i,j}[t_s]}}\mathbf{h}_{k}^{i,j}[t_s],\ \forall k, (i,j)$.
\end{algorithmic} \end{algorithm}

\textit{Convergence and complexity analysis:} The convergence of an IA-based  algorithm  is already provided in \cite{Beck:JGO:10}. In particular, Algorithm \ref{alg_SSP2} generates an improved solution after each iteration, which   converges to at least a local optimal solution of \eqref{eq:JFCSSSP2} when $n \rightarrow\infty$. The worst-case of per-iteration complexity of Algorithm \ref{alg_SSP2} is $\mathcal{O}\bigl(\sqrt{c}(v)^3 \bigr)$ by the interior-point method \cite[Chapter 6]{Ben:2001}, where $c=KJ + J$ and $v=KJ$ are the numbers of linear constraints and scalar variables, respectively.

%%%%%%%%%%%%%%%%%%%%%%%%%%%%%%%%%%%%%%%%%%%%%%%%%%%%%%%%%%%%%%%%%
\subsubsection{ZFBF-Based Transmission Design}
To make ZFBF efficient and feasible, the number of antennas of each RU $(i,j)$ is required to be larger than the number of UEs, i.e. $M_{i,j} > K,\ \forall (i,j)\in\mathcal{J}$, to cancel the inter-user interference transmitted by this RU. In addition, the system bandwidth is equally allocated to each RU $(i,j)$, i.e. $W^{i,j}=W/J$, to completely remove the intra-user interference and interference caused by other RUs. Under the proposed ZFBF technique, beamformer $\mathbf{w}_k^{i,j}[t_s]$ at RU $(i,j)$ is designed to satisfy $(\mathbf{h}_{k'}^{i,j}[t_s])^{\sfH}\mathbf{w}_k^{i,j}[t_s]=0, \forall k'\in\mathcal{K}\setminus\{k\}$. We denote by $\mathbf{H}_{-k}^{i,j}[t_s]\triangleq \bigr[\mathbf{h}_{1}^{i,j}[t_s]\cdots\mathbf{h}_{k-1}^{i,j}[t_s]\ \mathbf{h}_{k+1}^{i,j}[t_s]\cdots\mathbf{h}_{K}^{i,j}[t_s]\bigr]\in\mathbb{C}^{M\times (K-1)}$  the channel matrix from RU ($i,j$) to UEs, except UE $k$. Let $\mathbf{V}_{k}^{i,j}[t_s]\in\mathbb{C}^{M_{i,j}\times(M_{i,j}-K+1)}$ be the null space of $(\mathbf{H}_{-k}^{i,j}[t_s])^{\sfH}$. We can then write  $\mathbf{w}_k^{i,j}[t_s] = \mathbf{V}_{k}^{i,j}[t_s]\tilde{\mathbf{w}}_k^{i,j}[t_s]$, where $\tilde{\mathbf{w}}_k^{i,j}[t_s]\in\mathbb{C}^{(M_{i,j}-K+1)\times 1}, \forall k,(i,j)$ are the solutions to the ZFBF-based problem. By defining $\tilde{\nu}_k^{i,j}[t_s]\triangleq\|(\tilde{\mathbf{h}}_k^{i,j}[t_s])^{\sfH}\|_2^2$ with $\tilde{\mathbf{h}}_k^{i,j}[t_s]\triangleq(\mathbf{h}_k^{i,j}[t_s])^{\sfH}\mathbf{V}_{k}^{i,j}[t_s]\in\mathbb{C}^{1\times (M_{i,j}-K+1)}$, we can equivalently express $\tilde{\mathbf{w}}_k^{i,j}[t_s]$ as $\tilde{\mathbf{w}}_k^{i,j}[t_s]=\sqrt{\tilde{p}_{k}^{i,j}[t_s]}\frac{(\tilde{\mathbf{h}}_k^{i,j}[t_s])^{\sfH}}{\sqrt{\tilde{\nu}_k^{i,j}[t_s]}}$, where $\tilde{\mathbf{p}}[t_s]\triangleq\bigl[\tilde{p}_{k}^{i,j}[t_s]\bigr]_{k,(i,j)\in\mathcal{P}_k}$ are the solutions to the following problem:
\begin{subequations}\label{eq:JFCSSSP2_ZFBF}
\begin{IEEEeqnarray}{cl}
		\underset{\tilde{\mathbf{p}}[t_s]}{\mathrm{max}}& \quad   \sum_{k\in\mathcal{K}} \frac{\hat{q}_k[t_s]}{\tau}{r}_k(\tilde{p}_{k}^{i,j}[t_s])\label{eq:JFCSSSP2_ZFBFa}\\
		 \st&\quad  \bar{R}_k^{i,j}[t_s] \leq r_{k}^{i,j}(\tilde{p}_{k}^{i,j}[t_s])\tau,\ \forall k, (i,j)\label{eq:JFCSSSP2_ZFBFb}\\
		 &\quad \sum_{k\in\mathcal{K}}\tilde{p}_{k}^{i,j}[t_s] \leq P_{\max}^{i,j},\ \forall (i,j)\label{eq:JFCSSSP2_ZFBFc}
\end{IEEEeqnarray}
\end{subequations}
where $r_{k}^{i,j}(\tilde{p}_{k}^{i,j}[t_s])\triangleq W^{i,j}\log_2\bigl(1 + \frac{\tilde{p}_{k}^{i,j}[t_s]\tilde{\nu}_k^{i,j}[t_s]}{N_0}\bigr)$. The function $r_{k}^{i,j}(\tilde{p}_{k}^{i,j}[t_s])$ is concave in $\tilde{p}_{k}^{i,j}[t_s]$, leading to the convexity of problem \eqref{eq:JFCSSSP2_ZFBF}. From \eqref{eq:JFCSSSP2_ZFBFb}, one can show that $\tilde{p}_{k}^{i,j}[t_s] \geq \tilde{p}_{k,\min}^{i,j}[t_s]:=\frac{N_0}{\tilde{\nu}_k^{i,j}[t_s]} 2^{\frac{\bar{R}_k^{i,j}[t_s]}{W^{i,j}\tau} -1}$. We now develop an efficient method to solve \eqref{eq:JFCSSSP2_ZFBF} by formulating the partial Lagrangian as
\begin{align}
    L(\tilde{\mathbf{p}}[t_s],\boldsymbol{\mu}) =&\ \sum_{k\in\mathcal{K}} \frac{\hat{q}_k[t_s]}{\tau}{r}_k(\tilde{p}_{k}^{i,j}[t_s]) \nonumber\\
    &+ \sum_{(i,j)\in\mathcal{J}}\mu_{i,j}\bigl(P_{\max}^{i,j} - \sum_{k\in\mathcal{K}}\tilde{p}_{k}^{i,j}[t_s] \bigr)
\end{align}
where $\boldsymbol{\mu}\triangleq\{\mu_{i,j}\geq 0\}_{(i,j)\in\mathcal{J}}$ are the Lagrange multipliers of constraint \eqref{eq:JFCSSSP2_ZFBFc}. The dual function can be written as $g(\boldsymbol{\mu}) = \underset{\tilde{\mathbf{p}}[t_s] \geq 0}{\max}\{L(\tilde{\mathbf{p}}[t_s],\boldsymbol{\mu})|\tilde{p}_{k}^{i,j}[t_s] \geq \tilde{p}_{k,\min}^{i,j}[t_s], \forall k,$ $(i,j)\}$. We note that $L(\tilde{\mathbf{p}}[t_s],\boldsymbol{\mu})$ is separable with respect to $\tilde{p}_{k}^{i,j}[t_s]$. Thus, by solving
\begin{align}
    \tilde{p}_{k}^{i,j*}[t_s] = &\underset{\tilde{p}_{k}^{i,j}[t_s]\geq \tilde{p}_{k,\min}^{i,j}[t_s]}{\argmax}\Bigr\{\frac{\hat{q}_k[t_s]}{\tau} W\log_2\Bigl(1 + \frac{\tilde{p}_{k}^{i,j}[t_s]\tilde{\nu}_k^{i,j}[t_s]}{N_0}\Bigr) \nonumber\\
    &- \mu_{i,j}\tilde{p}_{k}^{i,j}[t_s] \Bigl\}
\end{align}
for a given $\mu_{i,j}$, the optimal solution to $\tilde{p}_{k}^{i,j}[t_s] $ is given as
\begin{align}\label{eq_powerZFBF}
    \tilde{p}_{k}^{i,j*}[t_s]=\max\Bigr\{\tilde{p}_{k,\min}^{i,j}[t_s], \frac{\hat{q}_k[t_s] W^{i,j}}{\tau\mu_{i,j}\ln 2} -\frac{N_0}{\tilde{\nu}_k^{i,j}[t_s]}\Bigl\}.
\end{align}
\begin{algorithm}[t]
\begin{algorithmic}[1]
\fontsize{10}{10}\selectfont
\protect\caption{Proposed Low-Complexity Algorithm for Solving  \eqref{eq:JFCSSSP2} with ZFBF-Based Transmission Design}
\label{alg_SSP3}
\global\long\def\algorithmicrequire{\textbf{Initialization:}}
\REQUIRE  Set $n:=1$ and  generate initial values $\underline{\mu}_{i,j}=0$ and $\overline{\mu}_{i,j}=+\infty, \forall (i,j)\in\mathcal{J}$
\FOR{each RU $(i,j)\in\mathcal{J}$ \textit{in parallel}}
\REPEAT
\STATE Compute $\mu_{i,j}^{(n)}=(\underline{\mu}_{i,j}+\overline{\mu}_{i,j})/2$ and $\tilde{p}_{k}^{i,j(n)}[t_s]$ as\qquad  in \eqref{eq_powerZFBF}
\IF{\ $\sum_{k\in\mathcal{K}}\tilde{p}_{k}^{i,j(n)}[t_s] - P_{\max}^{i,j} \leq 0$}
   \STATE Compute\ $\mu'_{i,j}=(\underline{\mu}_{i,j}+\overline{\mu}_{i,j})/2$\ and\ update\ $\overline{\mu}_{i,j}:=\mu'_{i,j}$
\ELSE 
 \STATE Update\ $\mu'_{i,j}=(\underline{\mu}_{i,j}+\overline{\mu}_{i,j})/2$\ and\ update\ $\underline{\mu}_{i,j}:=\mu'_{i,j}$
\ENDIF
 \STATE Set $n:=n+1$
\UNTIL $\overline\mu_{i,j}-\underline\mu_{i,j} \leq \delta$\   \{/*\textit{Satisfying a given accuracy level*}/\}
\ENDFOR
\STATE{\textbf{Output:}} $\mu_{i,j}^*=\mu_{i,j}^{(n)}$, $\tilde{p}_{k}^{i,j*}[t_s]=\max\Bigr\{\tilde{p}_{k,\min}^{i,j}[t_s], \frac{\hat{q}_k[t_s]W^{i,j}}{\tau\mu_{i,j}^*\ln 2} -\frac{N_0}{\tilde{\nu}_k^{i,j}[t_s]}\Bigl\}$ and $\mathbf{w}_k^{i,j,*}[t_s]=\frac{\sqrt{\tilde{p}_{k}^{i,j*}[t_s]}}{\sqrt{\tilde{\nu}_k^{i,j}[t_s]}}\mathbf{V}_{k}^{i,j}[t_s](\tilde{\mathbf{h}}_k^{i,j}[t_s])^{\sfH},\  \forall k,(i,j)$.
\end{algorithmic} \end{algorithm}
The optimal Lagrange multiplier $\mu_{i,j}$ is efficiently found by applying a bisection search method between $\underline{\mu}_{i,j}=0$ and a sufficiently large $\overline{\mu}_{i,j}$. An efficient algorithm for solving  \eqref{eq:JFCSSSP2} with ZFBF is summarized in Algorithm \ref{alg_SSP3}, which does not rely on existing convex optimization solvers. 

\subsection{O-RAN-based Implementation of Algorithm \ref{alg_JFCS}}

\begin{figure}[t]
	\centering
\includegraphics[width=0.85\columnwidth,trim={0cm 0.0cm 0cm 0.0cm}]{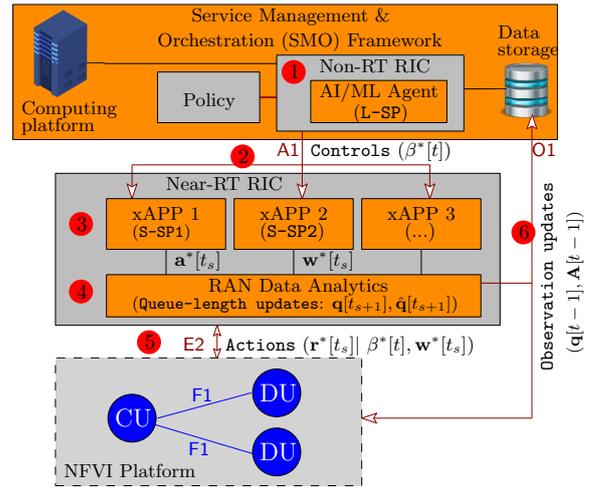}
	\caption{O-RAN Alliance reference architecture for implementing the proposed JFCS management scheme at time-frame $t$. }
\label{fig:ORANIplementation}
\end{figure}

{\color{black}
Fig. \ref{fig:ORANIplementation} illustrates the key steps for implementing the proposed JFCS management scheme at time-frame $t$ in the O-RAN architecture.
\renewcommand{\labelenumi}{(\arabic{enumi})}
\begin{enumerate}
\item[\circled{\cnt}] At the beginning of time-frame $t>1$, the three-step RL procedure for solving L-SP is carried out at Non-RT RIC based on the collected RAN data in SMO. The collected data include performance/observation and resource updates from CU, DU, RU and Near-RT RIC to SMO. For $t=1$, the flow-split decisions are initialized as $\boldsymbol{\beta}_k[1] = \frac{1}{|\mathcal{P}_k^{(1)}|}[1,\cdots,1], \forall k$ where $\mathcal{P}_k^{(1)}$ is the set of RUs in the feasible communication range of UE $k$.

\item[\circled{\cnt}] The optimal flow-split decisions $\boldsymbol{\beta}^*[t]$ are sent to Near-RT RIC via  A1  (the standardized open interface) for real deployment.

\item[\circled{\cnt}] Near-RT RIC hosts xApps (i.e. third party
applications) which communicate with CU/DU through the E2 interface. Given $\boldsymbol{\beta}^*[t]$, xAPPs deployed in Near-RT RIC control congestion and optimize RAN resources and functions in each time-slot $t_s$ of time-frame $t$ by solving S-SP1 and S-SP2 to obtain the optimal solutions of congestion control $\boldsymbol{a}^*[t_s]$  and beamformer $\mathbf{w}^*[t_s]$.

\item[\circled{\cnt}] Subsequently, the RAN Data Analytic component in Near-RT RIC updates queue-lengths  as in Step 6 of Algorithm \ref{alg_JFCS}. The updated queue-lengths are sent back to SMO through the O1 interface for periodic reporting.

\item[\circled{\cnt}] Given $\boldsymbol{\beta}^*[t]$ and $\mathbf{w}^*[t_s]$, the optimal service rate $\mathbf{r}(\mathbf{w}^*[t_s])$ is scheduled and applied to CU and DUs through  the E2 interface.

\item[\circled{\cnt}] After $T_f$ time-slots in the short-term scale $t_s$, performance and observations (e.g. $\mathbf{q}[t-1], \mathbf{A}[t-1]$) are updated to SMO through the O1 interface to re-estimate the flow-split decision $\boldsymbol{\beta}^*[t+1]$.
\end{enumerate}
}

%%%%%%%%%%%%%%%%%%%%%%%%%%%%%%%%%%%%%%%%%
\section{Performance Analysis of The JFCS Framework}\label{sec_PerfAnaly}
In this section, we analyze the main theoretical performance results of Algorithm \ref{alg_JFCS} and discuss their key insights, followed by concrete  proofs of the theorems. 
% followed by sketched  proofs of the theorems. 
\begin{assumption}\label{assump_2} To facilitate the analysis, we make the following additional assumptions.
\begin{itemize}
    \item Under the limited transmit power budget at RUs, the achievable rate of UE $k$  is upper bounded by $r^{\max} >0$, \textit{i.e.}, $r_k(\mathbf{w}[t_s])\leq r^{\max},\ \forall k, t_s$.  
    \item The congestion control variable $a_k[t_s]$ satisfies the condition $\bbE\{a_k^2[t_s]\} \leq A_1^{\max}$, where $A_1^{\max}$ is a sufficiently large positive constant \cite{LiuJSAC2017}.
\end{itemize}
\end{assumption}

% \subsection{Main Theoretical Results}

\begin{theorem}[Bounding the mean divergence of the auxiliary queue-length]\label{Theo_1} For a given scaling factor $\varphi$, let $\hat{\mathbf{q}}^{\infty}_{(\varphi)}$ and $\hat{\mathbf{q}}^*_{(\varphi)}$ be the steady-state and optimal queue-lengths,  respectively. From Assumptions \ref{assp:1} and \ref{assump_2}, the expected upper bound of the divergence of $\hat{\mathbf{q}}^{\infty}_{(\varphi)}$ from $\hat{\mathbf{q}}^*_{(\varphi)}$ is given as
\begin{equation}\label{eq_theo1eq}
 \mathbb{E}\{\|\hat{\mathbf{q}}^{\infty}_{(\varphi)} - \hat{\mathbf{q}}^*_{(\varphi)}\|_2\} \leq \mathsf{C}_1\sqrt{\varphi} = \mathcal{O}(\sqrt{\varphi})
\end{equation}
where $\mathsf{C}_1 \triangleq \sqrt{\frac{K\tau^2\Psi}{2}\bigl(A_1^{\max} + (r^{\max})^2\bigr)}$ is a positive constant.
\end{theorem}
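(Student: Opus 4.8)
The plan is to control the stationary \emph{second} moment $\bbE\{\|\hat{\mathbf{q}}^{\infty}_{(\varphi)} - \hat{\mathbf{q}}^*_{(\varphi)}\|_2^2\}$ and then recover the stated first-moment bound through Jensen's inequality, since $\bbE\{\|\cdot\|_2\} \leq \sqrt{\bbE\{\|\cdot\|_2^2\}}$. To this end I would introduce the quadratic Lyapunov function \emph{centered at the optimal queue}, namely $\Phi[t_s] \triangleq \frac{1}{2}\|\hat{\mathbf{q}}[t_s] - \hat{\mathbf{q}}^*_{(\varphi)}\|_2^2$, and analyze its one-slot drift $\Phi[t_{s+1}] - \Phi[t_s]$ along the auxiliary-queue recursion $\hat{q}_k[t_{s+1}] = \bigl[\hat{q}_k[t_s] + (a_k[t_s] - r_k(\mathbf{w}[t_s]))\tau\bigr]^+$. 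This is the natural object because the congestion controller S-SP1 makes $\hat{q}_k$ play the role of a scaled price, so the queue should contract toward $\hat{\mathbf{q}}^*_{(\varphi)}$.

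First I would drop the projection via the nonexpansiveness inequality $([x]^+ - y)^2 \leq (x-y)^2$, valid for $y\geq 0$ and applicable since each $\hat{q}_k^*\geq 0$, and expand with $(x+y)^2-x^2 = 2xy+y^2$ to obtain
\[
\Phi[t_{s+1}] - \Phi[t_s] \leq \tau\sum_{k\in\mathcal{K}}(\hat{q}_k[t_s] - \hat{q}_k^*)(a_k[t_s] - r_k(\mathbf{w}[t_s])) + \frac{\tau^2}{2}\sum_{k\in\mathcal{K}}(a_k[t_s] - r_k(\mathbf{w}[t_s]))^2 .
\]
The second (second-moment) term is handled with Assumption \ref{assump_2}: because $a_k[t_s],\,r_k\geq 0$ we have $(a_k-r_k)^2 \leq a_k^2 + r_k^2$, so its conditional expectation is at most $\frac{\tau^2}{2}\sum_k(A_1^{\max} + (r^{\max})^2) = \frac{K\tau^2}{2}(A_1^{\max} + (r^{\max})^2)$, which already supplies the $K$, $\tau^2/2$ and $(A_1^{\max}+(r^{\max})^2)$ factors appearing in $\mathsf{C}_1^2$.

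The crux is the cross term. Using the closed-form controller $a_k[t_s] = U_k^{'-1}(\hat{q}_k[t_s]/(\varphi\tau))$ from S-SP1 together with the optimal-operating-point characterization $\bar{r}_k^* = U_k^{'-1}(\hat{q}_k^*/(\varphi\tau))$ implied by Lemma \ref{lem_1}, the map $\hat{q}_k \mapsto a_k$ is strictly decreasing, so each product $(\hat{q}_k - \hat{q}_k^*)(a_k - \bar{r}_k^*)$ is nonpositive. Quantitatively, the mean value theorem combined with the two-sided curvature bound $\psi \leq -U_k^{''} \leq \Psi$ of Assumption \ref{assp:1} yields $(\hat{q}_k - \hat{q}_k^*)(a_k - \bar{r}_k^*) \leq -\frac{1}{\varphi\tau\Psi}(\hat{q}_k - \hat{q}_k^*)^2$, which is exactly where the factor $\Psi$ enters. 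Substituting back, taking expectations, and invoking stationarity at the steady state $\hat{\mathbf{q}}^{\infty}_{(\varphi)}$ (so that $\bbE\{\Phi[t_{s+1}] - \Phi[t_s]\} = 0$) collapses the drift inequality to $\frac{1}{\varphi\Psi}\bbE\{\|\hat{\mathbf{q}}^{\infty}_{(\varphi)} - \hat{\mathbf{q}}^*_{(\varphi)}\|_2^2\} \leq \frac{K\tau^2}{2}(A_1^{\max} + (r^{\max})^2)$, i.e. $\bbE\{\|\cdot\|_2^2\}\leq \mathsf{C}_1^2\varphi$; the square root then delivers \eqref{eq_theo1eq}.

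The step I expect to be the main obstacle is justifying that only the $a_k - \bar{r}_k^*$ contribution effectively survives in the cross term, i.e. controlling the residual $(\hat{q}_k - \hat{q}_k^*)(\bar{r}_k^* - r_k(\mathbf{w}[t_s]))$ generated by the random, channel-dependent service rate produced by the S-SP2 scheduler. I would address this by conditioning on $(\hat{\mathbf{q}}[t_s],\mathbf{H}[t_s])$ and arguing—through optimality of the weighted-rate scheduler and a stationary flow-balance (complementary-slackness) argument in the spirit of Lemma \ref{lem_1}—that this residual has nonpositive expectation and may therefore be discarded from the drift upper bound. A secondary technical point is the saturation $a_k = \min\{U_k^{'-1}(\cdot),A^{\max}\}$, which preserves monotonicity but makes the controller only piecewise differentiable; I would either restrict the mean-value argument to the unsaturated regime or observe that saturation only reinforces the sign of the product. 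Existence of a stationary distribution with finite second moment, required for the zero-drift step, is inherited from the queue-stability result established for the framework.
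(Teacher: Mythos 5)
Your proposal is correct and follows essentially the same route as the paper's proof in Appendix \ref{App_B}: the same optimum-centered quadratic Lyapunov function (the paper's $L(\hat{\mathbf{q}}_{(\varphi)}[t_s]) = \frac{1}{2\tau^2}\|\hat{\mathbf{q}}_{(\varphi)}[t_s]-\hat{\mathbf{q}}^*_{(\varphi)}\|_2^2$), the same second-moment bound $\mathsf{B}_1 \leq \frac{K}{2}(A_1^{\max}+(r^{\max})^2)$ from Assumption \ref{assump_2}, the same inverse-function-lemma/curvature argument turning the cross term into the contraction $-\frac{1}{\tau^2\Psi\varphi}\|\hat{\mathbf{q}}_{(\varphi)}[t_s]-\hat{\mathbf{q}}^*_{(\varphi)}\|_2^2$, and the same zero-stationary-drift step followed by Jensen. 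The scheduler residual you flag as the main obstacle is disposed of in the paper exactly as you anticipate, by splitting off $(\hat{\mathbf{q}}^*_{(\varphi)})^{\sfT}\bigl(\bfr^*-\bfr(\mathbf{w}[t_s])\bigr)\geq 0$ and invoking max-weight optimality at the stationary point, $(\hat{\mathbf{q}}^{\infty}_{(\varphi)})^{\sfT}\bfr^{\infty}\geq(\hat{\mathbf{q}}^{\infty}_{(\varphi)})^{\sfT}\bfr^*$.
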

\noindent The proof is detailed in Appendix \ref{App_B}. Theorem \ref{Theo_1}
implies that the divergence of the steady-state queue-length is bounded by $\mathcal{O}(\sqrt{\varphi})$. In particular, the smaller the value of $\varphi$, the less the divergence of $\hat{\mathbf{q}}^{\infty}_{(\varphi)}$. However, a small $\varphi $ will also result in a small congestion control rate and a faster convergence. When $\varphi$ is large, a better congestion control rate is achieved but with the cost of  larger steady-state queue-length divergence (i.e. larger delay and slower convergence). Hence, there exists an appropriate value of $\varphi$ to make this tradeoff more efficient. Theorem \ref{Theo_1} will immediately lead to the following  result.

\begin{corollary}[Queue-stability]\label{corollary1}
Given a scaling factor $\varphi$ and $\mathsf{C}_1$ in \eqref{eq_theo1eq}, the steady-state total queue-length remains  finite and scales as $\mathcal{O}(\varphi) + \mathcal{O}(\sqrt{\varphi})$, \textit{i.e.} \begin{align}\label{steady-statebound}
    \underset{t_s\to\infty}{\limsup}\ \mathbb{E}\{\|\hat{\mathbf{q}}_{(\varphi)}[t_s]\|_1\} &\leq \tau\Psi K A^{\max} \varphi + \sqrt{K}\mathsf{C}_1\sqrt{\varphi}  \nonumber\\
    &= \mathcal{O}(\varphi) + \mathcal{O}(\sqrt{\varphi}).
\end{align}
\end{corollary}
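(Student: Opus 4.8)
The plan is to obtain the bound directly from Theorem~\ref{Theo_1} via the triangle inequality, by splitting the steady-state queue into its deviation from the optimal queue and the optimal queue itself, and then controlling each piece separately. First I would note that, by definition of the steady-state queue-length, $\limsup_{t_s\to\infty}\mathbb{E}\{\|\hat{\mathbf{q}}_{(\varphi)}[t_s]\|_1\} = \mathbb{E}\{\|\hat{\mathbf{q}}^{\infty}_{(\varphi)}\|_1\}$, and apply
\[
\|\hat{\mathbf{q}}^{\infty}_{(\varphi)}\|_1 \leq \|\hat{\mathbf{q}}^{\infty}_{(\varphi)} - \hat{\mathbf{q}}^*_{(\varphi)}\|_1 + \|\hat{\mathbf{q}}^*_{(\varphi)}\|_1 ,
\]
so that, after taking expectations, the task reduces to bounding the two terms on the right-hand side.

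For the deviation term I would pass from the $\ell_1$- to the $\ell_2$-norm using the standard equivalence $\|\mathbf{x}\|_1 \leq \sqrt{K}\|\mathbf{x}\|_2$ for the $K$-dimensional vector $\hat{\mathbf{q}}$, and then invoke Theorem~\ref{Theo_1} verbatim:
\[
\mathbb{E}\{\|\hat{\mathbf{q}}^{\infty}_{(\varphi)} - \hat{\mathbf{q}}^*_{(\varphi)}\|_1\} \leq \sqrt{K}\,\mathbb{E}\{\|\hat{\mathbf{q}}^{\infty}_{(\varphi)} - \hat{\mathbf{q}}^*_{(\varphi)}\|_2\} \leq \sqrt{K}\,\mathsf{C}_1\sqrt{\varphi}.
\]
This immediately produces the $\mathcal{O}(\sqrt{\varphi})$ contribution appearing in \eqref{steady-statebound}.

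The harder part will be bounding the optimal queue $\|\hat{\mathbf{q}}^*_{(\varphi)}\|_1$. Here I would exploit the closed-form optimality condition of the congestion controller S-SP1~\eqref{eq:JFCSSSP1}: at the optimal operating point the auxiliary queue acts as the scaled dual variable of constraint~\eqref{eq:JFCS2c}, so that $\hat{q}_k^* = \varphi\tau\,U_k'(a_k^*)$ for every $k$. Since the optimal congestion control rate obeys $a_k^*\in[0,A^{\max}]$, and since Assumption~\ref{assp:1} guarantees that $U_k$ is increasing and strictly concave with $-U_k''\leq\Psi$, a mean-value estimate of the decreasing derivative $U_k'$ over the interval $[0,A^{\max}]$ yields $U_k'(a_k^*)\leq\Psi A^{\max}$. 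Summing over the $K$ data-flows then gives $\|\hat{\mathbf{q}}^*_{(\varphi)}\|_1 = \sum_{k}\hat{q}_k^* \leq \tau\Psi K A^{\max}\varphi$, which is the $\mathcal{O}(\varphi)$ term.

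Finally I would combine the two estimates to obtain $\mathbb{E}\{\|\hat{\mathbf{q}}^{\infty}_{(\varphi)}\|_1\} \leq \tau\Psi K A^{\max}\varphi + \sqrt{K}\,\mathsf{C}_1\sqrt{\varphi} = \mathcal{O}(\varphi)+\mathcal{O}(\sqrt{\varphi})$, which is finite for every fixed $\varphi$ and hence also certifies queue-stability in the sense of~\eqref{steady-state}. I expect the main obstacle to be the second term, namely justifying the dual-variable characterization $\hat{q}_k^*=\varphi\tau\,U_k'(a_k^*)$ of the steady-state optimal queue and controlling $U_k'(a_k^*)$ by $\Psi A^{\max}$; this is precisely where the curvature bound of Assumption~\ref{assp:1} and the boundedness of the congestion control rate must be used with care.
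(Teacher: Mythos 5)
Your proposal is correct and follows essentially the same route as the paper's own proof: the same triangle-inequality split of $\mathbb{E}\{\|\hat{\mathbf{q}}^{\infty}_{(\varphi)}\|_1\}$ into the deviation term plus $\|\hat{\mathbf{q}}^*_{(\varphi)}\|_1$, the same $\|\mathbf{x}\|_1\leq\sqrt{K}\|\mathbf{x}\|_2$ reduction of the deviation to Theorem~\ref{Theo_1}, and the same use of the S-SP1 optimality condition $\hat{q}^*_{k}=\varphi\tau\,U_k'(a_k^*)$ (Step 4 of Algorithm~\ref{alg_JFCS}) together with the curvature constant $\Psi$ of Assumption~\ref{assp:1} and $a_k^*\leq A^{\max}$ to obtain the $\tau\Psi K A^{\max}\varphi$ term. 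The only cosmetic difference is that the paper bounds $U_k'(a_k^*)\leq\Psi a_k^*$ before invoking $a_k^*\leq A^{\max}$, whereas you bound $U_k'(a_k^*)\leq\Psi A^{\max}$ directly; both steps rest on the identical implicit appeal to the second condition of Assumption~\ref{assp:1}.
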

\begin{proof}The proof of \eqref{steady-statebound} is straightforward by noticing the fact that $\underset{t_s\to\infty}{\limsup}\ \mathbb{E}\{\|\hat{\mathbf{q}}_{(\varphi)}[t_s]\|_1\} =  \mathbb{E}\{\|\hat{\mathbf{q}}_{(\varphi)}^{\infty}\|_1\}=\mathbb{E}\{\|\hat{\mathbf{q}}_{(\varphi)}^{\infty}-\hat{\mathbf{q}}_{(\varphi)}^{*}\|_1 + \|\hat{\mathbf{q}}_{(\varphi)}^{*}\|_1 \}$. Applying the inequality  $\|\mathbf{x}\|_1\leq \sqrt{K}\|\mathbf{x}\|_2$ for any $\mathbf{x}\in\mathbb{R}_+^K$ yields: $\|\hat{\mathbf{q}}_{(\varphi)}^{\infty}-\hat{\mathbf{q}}_{(\varphi)}^{*}\|_1 + \|\hat{\mathbf{q}}_{(\varphi)}^{*}\|_1 \leq \sqrt{K}\|\hat{\mathbf{q}}_{(\varphi)}^{\infty}-\hat{\mathbf{q}}_{(\varphi)}^{*}\|_2 + \|\hat{\mathbf{q}}_{(\varphi)}^{*}\|_1$. From \eqref{eq_theo1eq} and Step 4 of Algorithm \ref{alg_JFCS}, it follows that
\begin{align}
    &\underset{t_s\to\infty}{\limsup}\ \mathbb{E}\{\|\hat{\mathbf{q}}_{(\varphi)}[t_s]\|_1\} \nonumber\\
    &\leq \sqrt{K}\mathbb{E}\{\|\hat{\mathbf{q}}_{(\varphi)}^{\infty}-\hat{\mathbf{q}}_{(\varphi)}^{*}\|_2\} + \|\hat{\mathbf{q}}_{(\varphi)}^{*}\|_1\nonumber\\
    &\leq \sqrt{K}\mathsf{C}_1\sqrt{\varphi} + \tau \sum_{k\in\mathcal{K}}U_k^{'}\bigl(a_k^*\bigr) \varphi \leq \sqrt{K}\mathsf{C}_1\sqrt{\varphi} + \tau\Psi \sum_{k\in\mathcal{K}}a_k^* \varphi\ \nonumber\\
    &\leq \sqrt{K}\mathsf{C}_1\sqrt{\varphi} + \tau\Psi K A^{\max} \varphi\ (\text{due to}\ a_k^* \leq A^{\max},\forall k)
\end{align}
showing \eqref{steady-statebound}.\end{proof}

 Let $\bfa^{\infty}_{(\varphi)}\triangleq [a^{\infty}_{(\varphi),k}]^{\sfT}_{k\in\mathcal{K}}$ with $a^{\infty}_{(\varphi),k} = \mathbb{E}\bigl\{\min\{ A^{\max},$ $ U_k^{'-1}\bigl(\frac{\hat{q}_{(\varphi),k}^{\infty}}{\varphi\tau} \bigr)\}\bigl\}$ be the mean steady-state congestion control rate vector. We also denote by $U(\bfa)\triangleq \sum_{k\in\mathcal{K}}U_k(a_k)$ the total utility function of problem \eqref{eq:JFCS2}. The utility-optimality of Algorithm \ref{alg_JFCS} is stated by the following theorem, whose proof is given in Appendix \ref{App_C}.

\begin{theorem}[Optimality]\label{Theo_2} Given a scaling factor $\varphi$, Algorithm \ref{alg_JFCS} produces the mean steady-state congestion control rate vector $\bfa^{\infty}_{(\varphi)}$, satisfying 
\begin{align}\label{eq_theo2a}
    \|\bfa^{\infty}_{(\varphi)} - \bfa^{*}\|_2 \leq \mathsf{C}_2  \frac{1}{\sqrt{\varphi}} = \mathcal{O}(1/\sqrt{\varphi})
    \end{align}
where  $\mathsf{C}_2 \triangleq \frac{\mathsf{C}_1}{\psi\tau} = \sqrt{\frac{K\Psi}{2\psi}\bigl(A_1^{\max} + (r^{\max})^2\bigr)}$. Therefore, the optimal network utility maximization is bounded as
\begin{align}\label{eq_theo2b}
  U(\bfa^*) - \mathsf{C}_3\frac{1}{\varphi} =   U(\bfa^*) - \mathcal{O}(1/\varphi)\leq  U(\bfa^{\infty}_{(\varphi)}) 
    \end{align}
where $\mathsf{C}_3 \triangleq \frac{\Psi\mathsf{C}_1^2}{2\psi^2\tau^2} = \frac{K\Psi^2}{4\psi}\bigl(A_1^{\max} + (r^{\max})^2\bigr)$.    
\end{theorem}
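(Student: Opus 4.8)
The plan is to establish the two displayed bounds in order: first derive the rate-divergence bound \eqref{eq_theo2a} from the Lipschitz behaviour of the congestion controller combined with Theorem~\ref{Theo_1}, and then convert it into the utility gap \eqref{eq_theo2b} through a second-order (curvature) expansion of $U(\cdot)$. Throughout I would exploit the closed form of the optimal congestion rate from S-SP1, namely $a_k=g_k(\hat q_k)\triangleq\min\{A^{\max},U_k^{'-1}(\hat q_k/(\varphi\tau))\}$, which ties each steady-state and optimal rate to the corresponding queue-length component.

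\textbf{Step 1 (rate divergence).} Because $U_k$ is strictly concave with $-U_k''\ge\psi$, the inverse marginal utility satisfies $|\,\mathrm{d}U_k^{'-1}/\mathrm{d}x| = 1/|U_k''|\le 1/\psi$, and since $\min\{A^{\max},\cdot\}$ is nonexpansive, the map $g_k$ is Lipschitz with constant $1/(\psi\varphi\tau)$. As $\bfa^{*}$ is deterministic while $a^{\infty}_{(\varphi),k}=\mathbb{E}\{g_k(\hat q^{\infty}_{(\varphi),k})\}$, pulling the expectation through the absolute value gives $|a^{\infty}_{(\varphi),k}-a^{*}_k|\le \tfrac{1}{\psi\varphi\tau}\mathbb{E}\{|\hat q^{\infty}_{(\varphi),k}-\hat q^{*}_{(\varphi),k}|\}$. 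The neat step is then to apply Jensen's inequality to the (convex) Euclidean norm of the vector-valued variable, so that
\[
\|\bfa^{\infty}_{(\varphi)}-\bfa^{*}\|_2\le \tfrac{1}{\psi\varphi\tau}\,\bigl\|\mathbb{E}\{|\hat{\mathbf{q}}^{\infty}_{(\varphi)}-\hat{\mathbf{q}}^{*}_{(\varphi)}|\}\bigr\|_2\le \tfrac{1}{\psi\varphi\tau}\,\mathbb{E}\{\|\hat{\mathbf{q}}^{\infty}_{(\varphi)}-\hat{\mathbf{q}}^{*}_{(\varphi)}\|_2\}.
\]
Substituting the divergence bound $\mathbb{E}\{\|\hat{\mathbf{q}}^{\infty}_{(\varphi)}-\hat{\mathbf{q}}^{*}_{(\varphi)}\|_2\}\le \mathsf{C}_1\sqrt{\varphi}$ from Theorem~\ref{Theo_1} and recalling $\mathsf{C}_2=\mathsf{C}_1/(\psi\tau)$ yields $\|\bfa^{\infty}_{(\varphi)}-\bfa^{*}\|_2\le \tfrac{\mathsf{C}_1}{\psi\varphi\tau}\sqrt{\varphi}=\mathsf{C}_2/\sqrt{\varphi}$, which is exactly \eqref{eq_theo2a}. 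Using the norm-level Jensen inequality here is what lets me invoke Theorem~\ref{Theo_1} as stated (a first-moment bound) rather than needing a second-moment estimate.

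\textbf{Step 2 (utility gap) and the main obstacle.} Next I would transfer the rate bound to the objective using the curvature bound $-U_k''\le\Psi$. A second-order expansion of $U(\cdot)=\sum_k U_k(\cdot)$ about $\bfa^{*}$, whose Hessian is diagonal with entries in $[-\Psi,-\psi]$, gives
\[
U(\bfa^{*})-U(\bfa^{\infty}_{(\varphi)})\le \nabla U(\bfa^{*})^{\sfT}\bigl(\bfa^{*}-\bfa^{\infty}_{(\varphi)}\bigr)+\tfrac{\Psi}{2}\,\|\bfa^{*}-\bfa^{\infty}_{(\varphi)}\|_2^2 .
\]
Plugging \eqref{eq_theo2a} into the quadratic term produces $\tfrac{\Psi}{2}\,\mathsf{C}_2^2/\varphi=\mathsf{C}_3/\varphi$ since $\mathsf{C}_3=\tfrac{\Psi}{2}\mathsf{C}_2^2$, so \eqref{eq_theo2b} follows provided the first-order term is nonpositive. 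Establishing $\nabla U(\bfa^{*})^{\sfT}(\bfa^{*}-\bfa^{\infty}_{(\varphi)})\le 0$ is the crux of the argument: it requires invoking the first-order (KKT/variational) optimality of $\bfa^{*}$ for the concave maximization \eqref{eq:JFCS2}, where the scaled optimal queue $\hat{\mathbf{q}}^{*}_{(\varphi)}/(\varphi\tau)$ acts as the Lagrange multiplier so that $\nabla U(\bfa^{*})=\hat{\mathbf{q}}^{*}_{(\varphi)}/(\varphi\tau)$, and matching it against the stationarity relation $\nabla U(\bfa^{\infty}_{(\varphi)})=\hat{\mathbf{q}}^{\infty}_{(\varphi)}/(\varphi\tau)$ enforced by the congestion controller, all while accounting for the expectation and the $A^{\max}$ saturation hidden in $\bfa^{\infty}_{(\varphi)}$. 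I expect this sign/stationarity bookkeeping---reconciling the feasibility geometry of $\bfa^{*}$ with the steady-state fixed point---to be the main difficulty; the Lipschitz and curvature estimates themselves are routine once it is settled.
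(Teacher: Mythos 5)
Your Step 1 is correct and is essentially the paper's own argument (Appendix \ref{App_C}): strong concavity $-U_k''\ge\psi$ plus the inverse function lemma make $U_k^{'-1}(\cdot/(\varphi\tau))$ Lipschitz with constant $1/(\psi\varphi\tau)$, and combining with Theorem \ref{Theo_1} yields \eqref{eq_theo2a} exactly as in \eqref{eq_C1}. In fact your treatment is more careful than the paper's: the nonexpansiveness of the $\min\{A^{\max},\cdot\}$ saturation and the norm-level Jensen step (which lets the first-moment bound of Theorem \ref{Theo_1} suffice) are both glossed over in Appendix \ref{App_C}, which writes the Lipschitz chain directly without tracking where the expectation defining $\bfa^{\infty}_{(\varphi)}$ enters.

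Step 2, however, contains a genuine gap, and you have located it yourself but your plan for closing it points the wrong way. You need $\nabla U(\bfa^{*})^{\sfT}\bigl(\bfa^{*}-\bfa^{\infty}_{(\varphi)}\bigr)\le 0$, and you propose to get it from the KKT/variational optimality of $\bfa^{*}$ for \eqref{eq:JFCS2}. But the variational inequality at a maximizer runs in the opposite direction: if $\bfa^{\infty}_{(\varphi)}$ lies in the feasible set over which $\bfa^{*}$ is optimal, then $\nabla U(\bfa^{*})^{\sfT}\bigl(\bfa^{\infty}_{(\varphi)}-\bfa^{*}\bigr)\le 0$, i.e., your first-order term is \emph{nonnegative}, so the expansion as planned does not close. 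Nor can the term be dismissed by interiority: each $U_k$ is strictly increasing, so $\nabla U(\bfa^{*})=\hat{\mathbf{q}}^{*}_{(\varphi)}/(\varphi\tau)\neq 0$, and the coupling constraint \eqref{eq:JFCS2c} is active at optimum by Lemma \ref{lem_1}. For comparison, the paper performs none of the stationarity bookkeeping you anticipate: its entire proof of \eqref{eq_theo2b} is the one-line assertion that a maximizer of the $\Psi$-smooth strictly concave $U$ satisfies $U(\bfa^{*})-U(\bfa^{\infty}_{(\varphi)})\le\frac{\Psi}{2}\|\bfa^{*}-\bfa^{\infty}_{(\varphi)}\|_2^2$ --- the standard descent-lemma bound, which is valid verbatim only when the gradient term vanishes (an unconstrained maximizer), so the paper implicitly drops exactly the term you flag. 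In short: your instinct that this term is the crux is right, your proposed sign mechanism would fail, and the paper's official proof resolves the issue by fiat rather than by argument; to reproduce \eqref{eq_theo2b} you would either have to invoke the quadratic-growth bound as the paper does, or supply a genuine steady-state/fixed-point argument controlling $\nabla U(\bfa^{*})^{\sfT}(\bfa^{*}-\bfa^{\infty}_{(\varphi)})$, which neither your proposal nor Appendix \ref{App_C} actually provides.
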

The analytical results in Theorem \ref{Theo_2} show that the divergence of the steady-state  congestion control rate vector $\bfa^{\infty}_{(\varphi)}$ from $\bfa^*$ scales as $\mathcal{O}(1/\sqrt{\varphi})$, which is the same as  in \cite{LiuJSAC2017,EryilmazToN2007}. The utility-optimality gap can be reduced by increasing $\varphi$, but this will also lead to a larger steady-state queue-length divergence.

%%%%%%%%%%%%%%%%%%%%%%%%%%%%
\section{Numerical Results}\label{sec_NumericalResults}
In this section, we first present simulation setup and parameters in Section \ref{sec_NumericalResults_A} and then provide numerical results of Algorithm \ref{alg_JFCS} in Section \ref{sec_NumericalResults_B}. The results and performance comparison over existing schemes  will be provided in Section \ref{sec_NumericalResults_C}.

\subsection{Simulation Setups and Parameters}\label{sec_NumericalResults_A}
\begin{figure}[!ht]
	\centering
	\includegraphics[width=1.1\columnwidth,trim={0cm 0.0cm 0cm 0.0cm}]{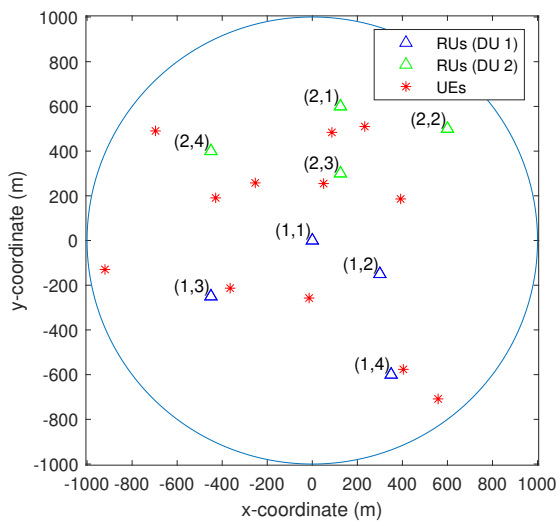}
	\caption{A system topology with $J=8$ RUs and $K=12$ UEs.}
	\label{fig:Layout}
\end{figure}

  \begin{table}[t]
	\centering  
	\captionof{table}{Simulation Parameters}
	\label{tab:Simulationparameter}
% 	\vspace{-5pt}
 	\scalebox{0.9}{
		\begin{tabular}{l|l}
			\hline
			Parameter & Value \\
			\hline\hline
			System bandwidth, $W$ & 20 MHz \\
 			Number of RUs, $J$ & 8\\
 			Number of UEs, $K$ & 12\\
			Number of antennas at RUs $(i,j)$, $M_{i,j} \equiv  M$ & 16\\
 			RUs' height  & 10 m\\
 			UEs' antenna  altitude  & 1.5 m\\
			Power budget at RU $(i,j)$, $P_{\max}^{i,j}\equiv P_{\max}$ & 43 dBm\\
			Noise figure, $\textsf{NF}$ & 9 dB\\
			Maximum average delay, $\bar{d}_k\equiv d$ & 10 ms\\
			Require reliable communication, $\epsilon_k\equiv \epsilon$  & 0.95\\
 			Number of frames, $T$ & 10000\\
 			Number of time-slots per frame, $T_f$ & 10\\
 			Duration of one frame, $T_c$ & 10 ms\\
			Duration of one time-slot, $\tau$ & 1 ms\\
			Trade-off factor (Boltzmann temperature), $\lambda$ & 0.3\\
		\hline		  				
		\end{tabular}
	}
\end{table}

We consider a system topology given in Fig.~\ref{fig:Layout}, including 8 RUs and $12$ UEs located within a circle of 1-km radius. There are two DUs, each connected to 4 RUs.  RUs are uniformly distributed in the area, while those of UEs are randomly located in each time-frame $t.$ The large-scale fading coefficient $\xi[t] \in \{\xi_{k}^{i,j}[t] \}_{\forall (i,j),k}$ is modeled as the three-slope path loss model  \cite{Ngo:TWC:Mar2017}, such as $\xi[t] = \xi_0 - 35 \log_{10}(d[t]) + 20c_0\log_{10}(d/d_0)+15c_1\log_{10}(d/d_1)$
% \begin{equation*}
% 	\xi[t] =
% 	\begin{cases}
% 	\xi_0 - 35 \log_{10}(d[t]),&\text{if }d[t] > d_1 \\
% 	\xi_0 - 15 \log_{10}(d_1) - 20 \log_{10}(d[t]),&\text{if }  d[t]\in(d_0,d_1] \quad \\
% 	\xi_0 - 15 \log_{10}(d_1) - 20 \log_{10}(d_0),&\text{if } d[t] \leq d_0
% 	\end{cases}
% \end{equation*}
where $\xi_0 = -140.7 + \mathsf{SF}$ dB, $d_0 = 10$ m,  $d_1 = 50$ m, and $d$ is the distance between an RU and a UE; here $c_i=\max\{0,\frac{d_i-d}{|d_i-d|}\}$ with $i\in\{0,1\}$ and $\mathsf{SF} \sim \mathcal{CN}(0, \sigma_{\mathsf{SF}})$ denotes the shadowing factor with $\sigma_{\mathsf{SF}}=8$ dB. The Rician factor $\kappa[t] \in \{\kappa_k^{i,j}[t]\}_{\forall (i,j),k}$ is given as $\kappa = P_{\mathsf{LoS}}(d[t])/ \big(1-P_{\mathsf{LoS}}(d[t])\big)$, where the LoS probability  follows the 3GPP–UMa model as $P_{\mathsf{LoS}}(d[t]) = \min \left( \frac{18}{d[t]},1 \right) \big( 1 - \exp(-\frac{d[t]}{36}) \big) + \exp(-\frac{d[t]}{36})$     \cite{jafari2015study}. We consider uniform linear arrays with half-wavelength distances between array elements to model the LoS channels at RUs. The array response vector is generated as $\bar{\mathbf{h}}^{i,j}_{k}[t] = \va(\phi^{i,j}_{k}[t])$, where each element $m$ is given as $\big[\va(\phi^{i,j}_{k}[t])\big]_{m} = \exp\bigr(j \pi (m-1) \sin \phi^{i,j}_{k}[t]\bigr)$ with $\phi^{i,j}_{k}[t]\in [-\pi / 2, \pi / 2)$ being the angle-of-departure (AoD) at RU $(i,j)$. The noise power is modeled as $N_0 = -170 + 10 \log_{10} (W) + \mathsf{NF}$ dBm, where $\mathsf{NF}=9$ dB denotes the noise figure.
  
We run Algorithm \ref{alg_JFCS} over $T=10000$ frames,  each consists of $T_f = 10$ time-slots (subframes) and has duration of $T_c=10$ ms, followed by 5G NR Frame structure \cite{3GPP_Release_15}.  In each time-frame $t$, UE $k$ is served by a subset of four RUs. To illustrate the heterogeneity of UEs, we assume that the arrival rate $A_k[t]$ is uniformly distributed in [1,\ 3] Gbps. The step sizes (learning rates) are set to decrease after each frame as $\eta_u[t]=1/(t+1)^{0.51}$, $\eta_\theta[t]=1/(t+1)^{0.55}$ and $\eta_\beta[t]=1/(t+1)^{0.6}$ \cite{SamarakoonTWC13}. We adopt the proportional fairness metric to model the utility function as: $U_k(r_k)=\log(0.001+r_k), \forall k$ \cite{XiaojunJSAC06}. The key parameters are summarized in Table \ref{tab:Simulationparameter} for ease of cross-referencing, followed by studies in \cite{jafari2015study,SamarakoonTWC13,3GPP_Release_15,Ngo:TWC:Mar2017,BennisTWC2013}. In the following figures, results are averaged over the last 6000 frames.

\textbf{Benchmark schemes:} To demonstrate the benefits of the proposed JFCS algorithm, we consider the following three benchmark schemes:
\begin{itemize}
    \item ``NUM with fixed resource allocation (NUM-FRA)" \cite{StaiCOMML14}: Under Algorithm \ref{alg_JFCS}, RUs allocate power equally to UEs. 
    \item ``NUM with equal flow-split distribution (NUM-EFSD):" CU splits data-flows of all UEs equally among the selected paths, \textit{i.e.}, $\beta_k^{i,j}[t]=1/|\mathcal{P}_k|,\forall (i,j)\in\mathcal{P}_k$.
    \item ``NUM with the nearest RU selection (NUM-NRU):" Under Algorithm \ref{alg_JFCS}, each UE $k$ selects only the nearest RU for the data transmission, \textit{i.e.} $\beta_k^{i,j}[t]=1$ if RU $(i,j)$ is the nearest RU to UE $k$.
\end{itemize}

\subsection{Numerical Results of Algorithm \ref{alg_JFCS}}\label{sec_NumericalResults_B}
\begin{figure}[!ht]%
\centering
\subfigure[Impact of $\varphi$ on congestion control rate]{%
\label{fig:Convergence-a}%
\includegraphics[width=0.91\columnwidth]{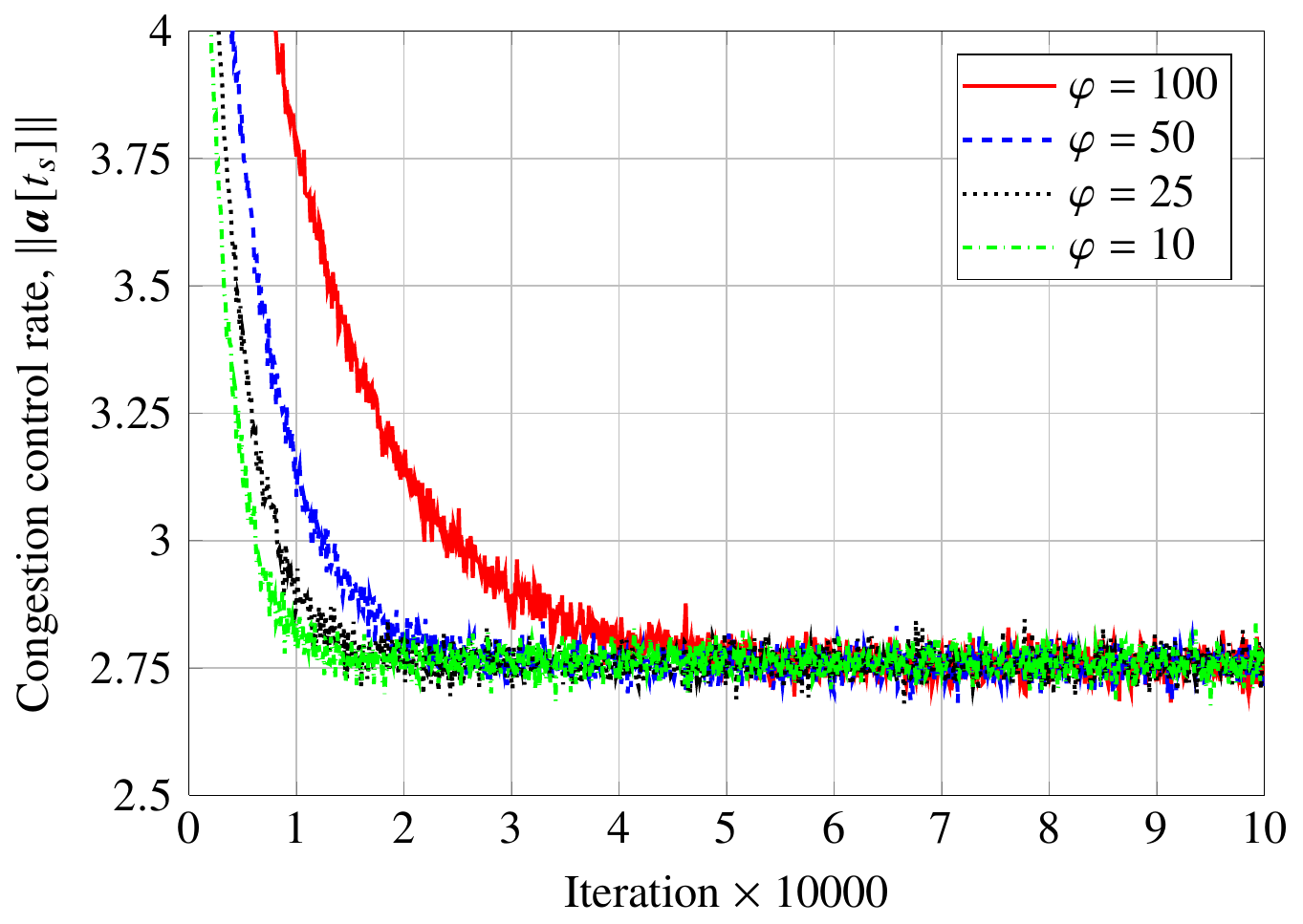}
}%
\hspace{4pt}%
\subfigure[Impact of $\lambda$ on estimated utility]{%
\label{fig:Convergence-b}%
\includegraphics[width=0.9\columnwidth]{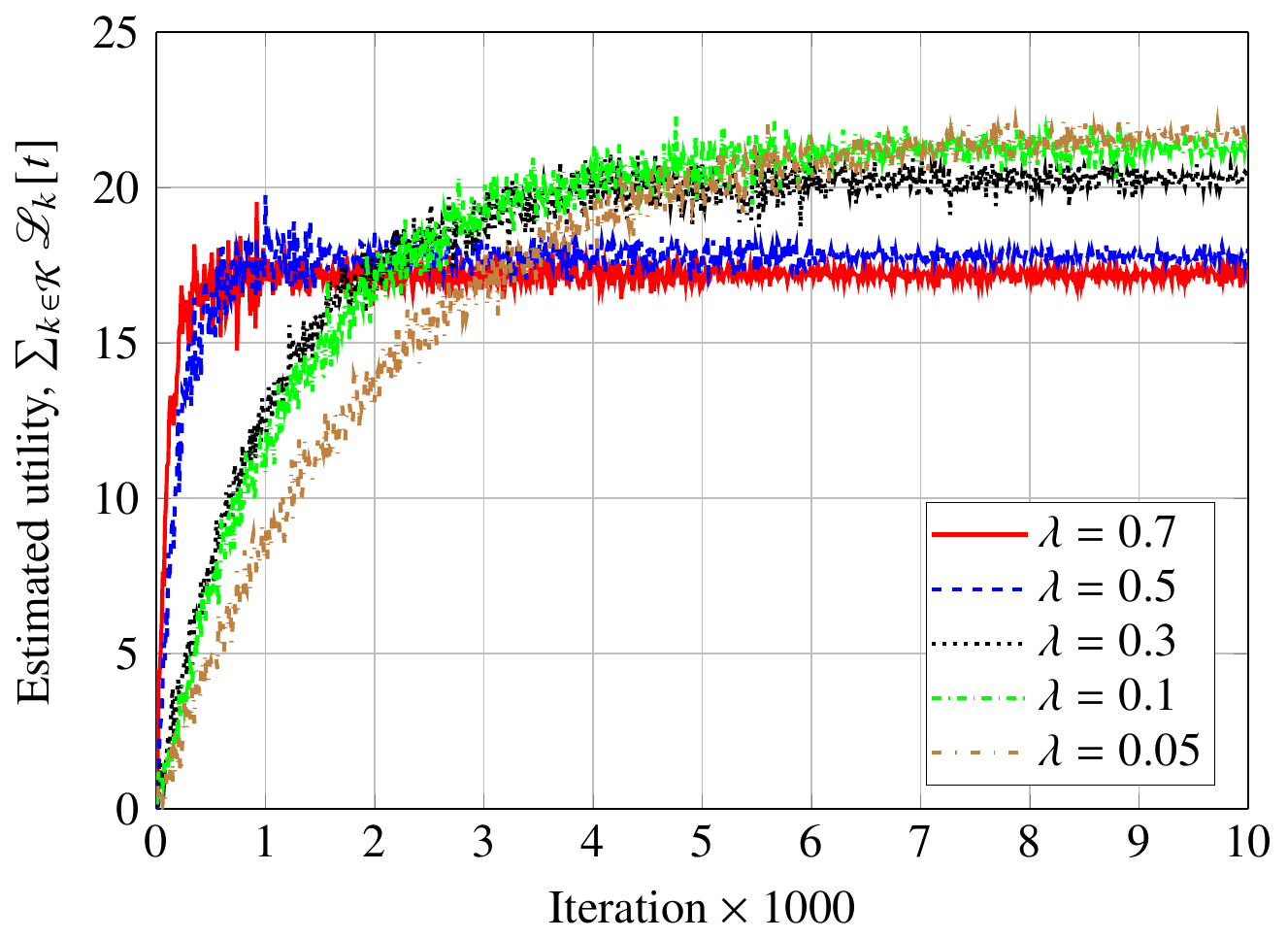}} 
\caption[]{\small Convergence behavior of Algorithm \ref{alg_JFCS} with ZFBF.}%
\label{fig:Convergence}%
\end{figure}

We first study the impacts of $\varphi$ and $\lambda$ on the convergence behavior of Algorithm \ref{alg_JFCS} in Fig. \ref{fig:Convergence}. From Fig. \ref{fig:Convergence}(a), it can be observed that the congestion control rates for  different values of the scaling factor $\varphi$ converge to the same optimal solution, and $\|\boldsymbol{a}[t_s]\|$ is almost independent of $\varphi$. In addition, increasing $\varphi$ results in a smaller divergence
of the steady-state congestion control rate (see Theorem \ref{Theo_2}), but also slows down the convergence rate of Algorithm \ref{alg_JFCS}. The reason is attributed to the fact that for a large $\varphi$, the network utility function $\sum_{k\in\mathcal{K}}U_k(a_k[t_s])$ in \eqref{eq:JFCS3a} will prevail over the Lyapunov drift function $\Delta L[t_s]$, which requires more iterations to guarantee network stability. In Fig. \ref{fig:Convergence}(b), we increase  the trade-off factor $\lambda$ (\textit{i.e.} Boltzmann
temperature) from 0.05 to 0.7. The result shows that the larger the value of $\lambda$, the better the estimated utility that can be achieved with the cost of lower convergence speed of the RL process. From \eqref{eq_bestresponse2}, the paths associated with the highest estimated regret $\hat{\theta}_k^{i,j}[t]$ will be selected to minimize the best response function $f(\hat{\boldsymbol{\theta}}[t])$. Conversely, a low value of $\lambda$ can speed up convergence by allocating traffic data uniformly to all paths but leads to a very sub-optimal solution.

\begin{figure}[t]
	\centering
	\includegraphics[width=1\columnwidth,trim={0cm 0.0cm 0cm 0.0cm}]{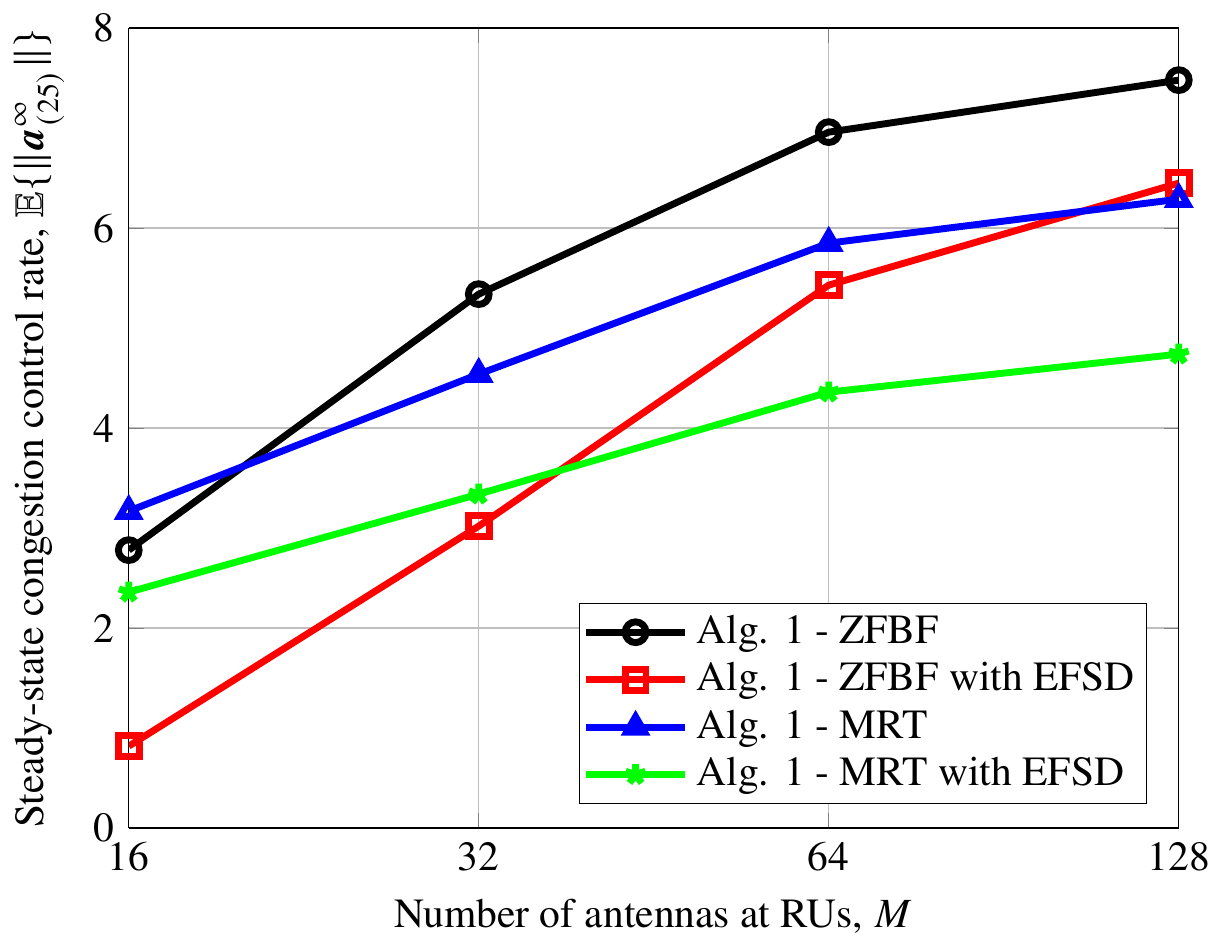}
% 		\vspace{-2pt}
	\caption{\small Performance of Algorithm \ref{alg_JFCS} with different transmission strategies versus the number of antennas at RUs, $M \equiv M_{i,j}, \forall (i,j)$.}
	\label{fig:ComparisionofAlg1}
\end{figure}

In Fig. \ref{fig:ComparisionofAlg1}, we evaluate the performance of Algorithm \ref{alg_JFCS} with different transmission strategies, \textit{namely} MRT and ZFBF. For a fixed $\varphi = 25$, we vary the number of antennas at RUs $M \equiv  M_{i,j}, \forall (i,j)$ from 16 to 128. For each transmission design, we also plot the steady-state congestion control rate $\mathbb{E}\{\|\boldsymbol{a}_{(25)}^{\infty}\|\}$ with the equal flow-split distribution. As seen from Fig. \ref{fig:ComparisionofAlg1} that the steady-state congestion control rate of all schemes increases as $M$ increases. Unsurprisingly, Algorithm \ref{alg_JFCS} with ZFBF offers better performance in terms of congestion control rate than that of MRT when the number of antennas at RUs is sufficiently large to cancel the inter-user interference transmitted by the same RU. It is obvious that the higher the effective data rate of a data-flow in the downlink, the lower the total  queue-length of that data-flow (or user), resulting in a higher congestion control rate.

Since Algorithm \ref{alg_JFCS} with MRT is based on the IA method that requires high computation complexity and relies on existing convex optimization solvers, we  provide only the performance of Algorithm \ref{alg_JFCS} with ZFBF in the following section.

\subsection{Performance Comparison}\label{sec_NumericalResults_C}
\begin{figure}[t]
	\centering
	\includegraphics[width=1\columnwidth,trim={0cm 0.0cm 0cm 0.0cm}]{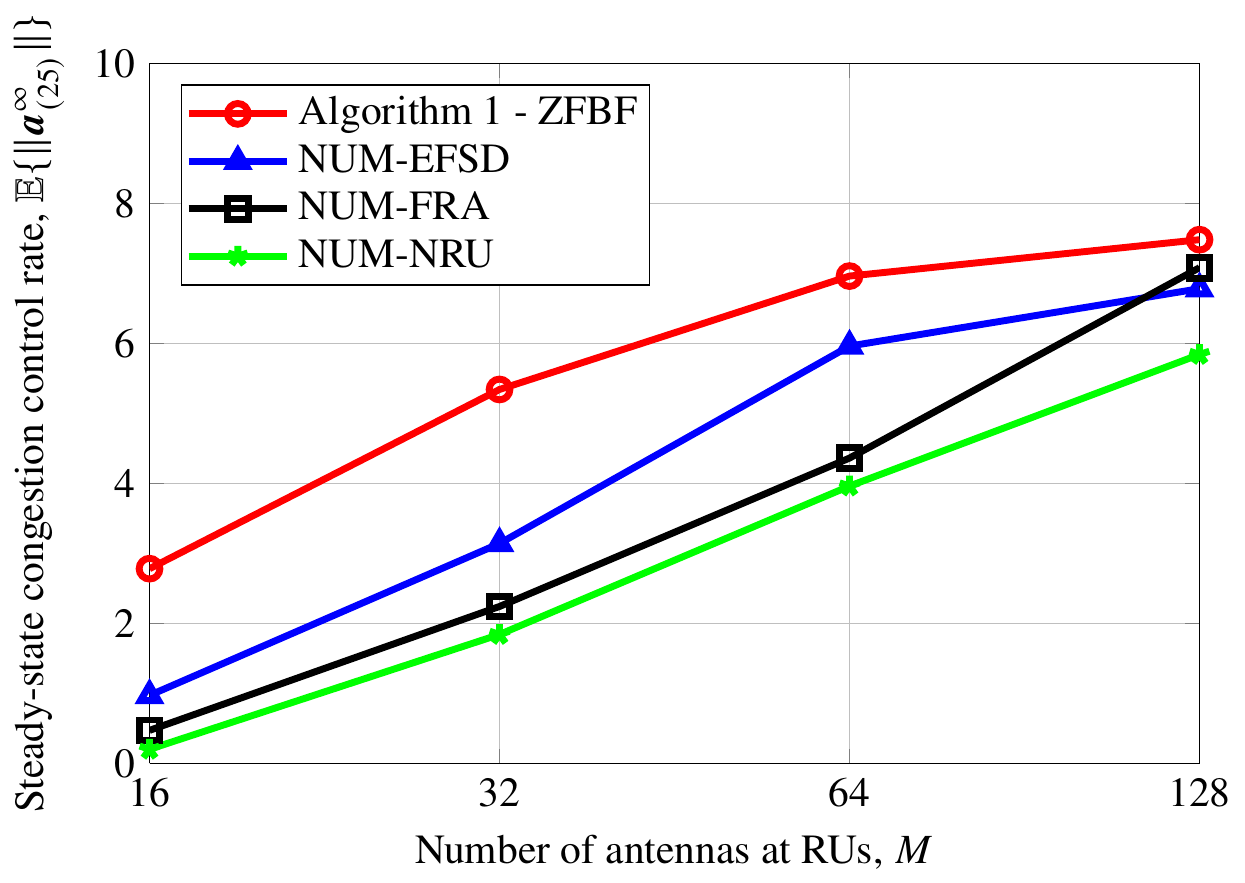}
% 		\vspace{-2pt}
	\caption{\small The steady-state congestion control rate with respect to the number of antennas at RUs, $M \equiv M_{i,j}, \forall (i,j)$.}
	\label{fig:CongestionratevsMout}
\end{figure}

Next, we show the performance comparison in terms of the steady-state congestion  control rate $\mathbb{E}\{\|\boldsymbol{a}_{(25)}^{\infty}\|\}$ among the considered schemes versus the number of antennas at RUs in Fig. \ref{fig:CongestionratevsMout}. We fix $\varphi = 25$ and vary $M$ from 16 to 128 to investigate the impact of the physical factor. As $M$ increases,  the downlink instantaneous achievable rates of all UEs also significantly increase since more degrees of freedom are added to leverage multi-user diversity, resulting in lower queue-lengths. For a fixed value of $\varphi$, the steady-state congestion  control rate vector  increases monotonically
with $M$. Clearly,  Algorithm \ref{alg_JFCS} outperforms the benchmark schemes in all ranges of $M$, and the gap is deeper when $M$ is small. In addition, the NUM-FRA and NUM-NRU, which fairly allocate the power budget and fix the path selection to UEs, respectively, provide the worst performance. These observations demonstrate the effectiveness of the proposed Algorithm \ref{alg_JFCS} by jointly optimizing the flow-split distribution, congestion control, scheduling and radio resource allocation.

Lastly, the impacts of scaling factor $\varphi$ on the steady-state total queue-length $\mathbb{E}\{\|\hat{\mathbf{q}}_{(\varphi)}^{\infty}\|_1\}$ and average worst-case delay (\textit{i.e.}, the delay of slowest data-flow) are plotted in Figs. \ref{fig:queue-length} and \ref{fig:latency}, respectively. It can be seen from Fig. \ref{fig:queue-length} that the steady-state total queue-length of all schemes monotonically scales as $\mathcal{O}(\varphi) + \mathcal{O}(\sqrt{\varphi})$, which confirms our theoretical results in Corollary \ref{corollary1}. We recall from Theorem \ref{Theo_2} that the utility-optimality gap can be narrowed by increasing $\varphi$, but with the cost of higher delay, as shown in Fig. \ref{fig:latency}. When $\varphi$ is larger than 25, all the considered schemes violate the  maximum allowable average delay of $\bar{d}=10$ ms. It implies that the data traffic cannot be completely transmitted to UEs in each time-frame. Nevertheless, Algorithm \ref{alg_JFCS} still provides  the best performance out of the schemes considered.

\begin{figure}[t]
	\centering
	\includegraphics[width=1\columnwidth,trim={0cm 0.0cm 0cm 0.0cm}]{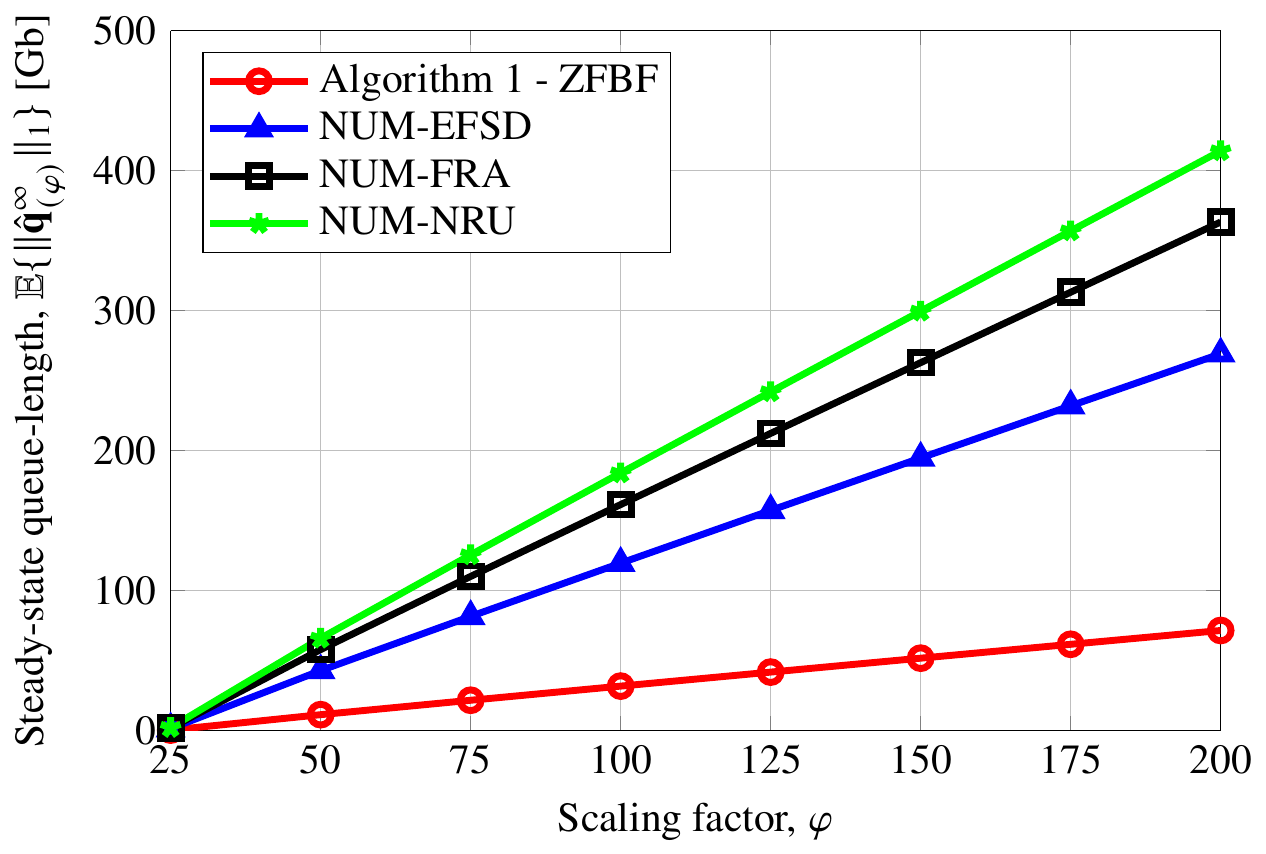}
% 		\vspace{-2pt}
	\caption{\small The steady-state total queue-length with respect to $\varphi$.}
	\label{fig:queue-length}
\end{figure}

\begin{figure}[t]
	\centering
	\includegraphics[width=1\columnwidth,trim={0cm 0.0cm 0cm 0.0cm}]{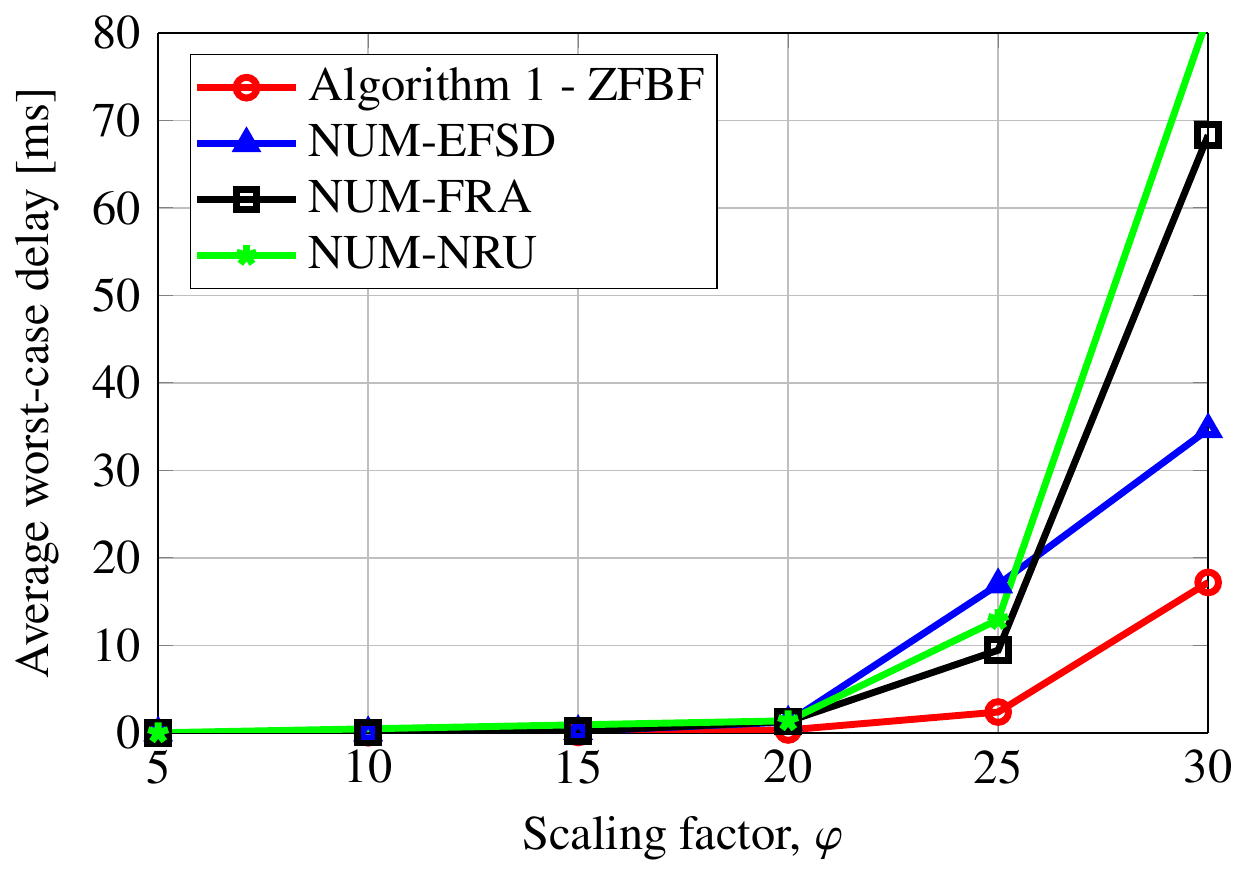}
% 		\vspace{-2pt}
	\caption{\small Average worst-case delay with respect to $\varphi$.}
	\label{fig:latency}
\end{figure}

\section{Conclusion}\label{sec_Conclusion}
We have proposed a new holistic multi-layer optimization framework, called JFCS, to enable intelligent traffic steering in a hierarchical O-RAN architecture. In particular, we have developed an intelligent resource management algorithm based on network utility maximization and stochastic optimization to  efficiently and adaptively direct traffic to appropriate RUs by jointly optimizing the flow-split
distribution, congestion control and scheduling. JFCS is proved to achieve fast convergence, long-term utility-optimality and significant delay reduction compared to state-of-the-art approaches. To that end, the insights in this work will foster future studies in this area, especially in the design of more advanced AI/ML solutions to achieve enhanced control and flexibility in O-RAN.

\appendices
\renewcommand{\thesectiondis}[2]{\Alph{section}:}
\section{Derivation of Inequality} \label{app:DerivationofInequ}
\renewcommand{\theequation}{\ref{app:DerivationofInequ}.\arabic{equation}}\setcounter{equation}{0}
We will find the concave lower bound of $r_{k}^{i,j}[t_s]$. By \cite[Appendix A]{Dinh:TCOMM:2017}, it is true that the function $r(x,y) = -\ln(1-x^2/y)$ is convex in the domain $y>x^2$ with $x,y\in\mathbb{R}_+$. The global concave lower bound of $r(x,y)$ at the feasible point ($\bar{x}, \bar{y}$) is given as
\begin{align}
    r(x,y) &\geq r(\bar{x},\bar{y}) + \Bigl<\Bigr(\frac{\partial r(\bar{x},\bar{y})}{\partial \bar{x}},\frac{\partial r(\bar{x},\bar{y})}{\partial \bar{y}}\Bigl), (x-\bar{x},y-\bar{y})  \Bigl>\nonumber\\
    & = r(\bar{x},\bar{y}) - \frac{\bar{x}^2}{\bar{y}-\bar{x}^2} + 2\frac{\bar{x}x}{\bar{y}-\bar{x}^2} - \frac{\bar{x}^2}{\bar{y}-\bar{x}^2}\frac{y}{\bar{y}}\label{eq_IAappro}
\end{align}
by applying the first-order Taylor approximation. By the fact that $\ln\bigl(1+\frac{x^2}{z}\bigr)=-\ln\bigl(1-\frac{x^2}{z+x^2}\bigr)$ and substituting $y=z+x^2$, $\bar{y}=\bar{z}+\bar{x}^2$,  $x=\sqrt{v}$ and $\bar{x}=\sqrt{\bar{v}}$ into \eqref{eq_IAappro}, we obtain
\begin{align}
    r(v,z) &\triangleq \ln\bigl(1+\frac{v}{z}\bigr) \geq r(\bar{v},\bar{z}) - \frac{\bar{v}}{\bar{z}} + 2\frac{\sqrt{\bar{v}}\sqrt{v}}{\bar{z}} - \frac{\bar{v}(z+v)}{\bar{z}(\bar{z}+\bar{v})}
    \nonumber\\
    &:=\bar{r}(v,z;\bar{v},\bar{z})\label{eq_IAapproConcave}
\end{align}
where $\bar{r}(v,z;\bar{v},\bar{z})$ is concave and $\bar{r}(\bar{v},\bar{z};\bar{v},\bar{z}) = r(\bar{v},\bar{z})$ whenever $(v,z)=(\bar{v},\bar{z})$.

\renewcommand{\thesectiondis}[2]{\Alph{section}:}
\section{Proof of Theorem \ref{Theo_1}} \label{App_B}
\renewcommand{\theequation}{\ref{App_B}.\arabic{equation}}\setcounter{equation}{0}
For a given $\varphi$, the  quadratic Lyapunov function defined in Section \ref{sec:ORANNUM_A} is rewritten with respect to $\hat{\mathbf{q}}_{(\varphi)}[t_s]$ as: 
 $L(\hat{\mathbf{q}}_{(\varphi)}[t_s]) = \frac{1}{2\tau^2}\|\hat{\mathbf{q}}_{(\varphi)}[t_s] - \hat{\mathbf{q}}^*_{(\varphi)}\|_2^2$. Following \cite[Theorem 3]{LiuJSAC2017}, the mean Lyapunov drift from  time-slot $t_s$ to $t_{s+1}$ is computed as
\begin{align}\label{eq_B1}
  &\Delta \bar{L}(\hat{\mathbf{q}}_{(\varphi)}[t_s]) \nonumber\\
  &= \mathbb{E}\{\Delta L(\hat{\mathbf{q}}_{(\varphi)}[t_s])\} 
  = \mathbb{E}\{L(\hat{\mathbf{q}}_{(\varphi)}[t_{s+1}]) - L(\hat{\mathbf{q}}_{(\varphi)}[t_s])\}  \nonumber\\
  &= \frac{1}{2\tau^2}\mathbb{E}\Bigl\{\bigl(\hat{\mathbf{q}}_{(\varphi)}[t_{s+1}] +\hat{\mathbf{q}}_{(\varphi)}[t_{s}] 
 - 2\hat{\mathbf{q}}^*_{(\varphi)} \bigr)^{\sfT}\nonumber\\
 &\quad \times\bigl(\hat{\mathbf{q}}_{(\varphi)}[t_{s+1}]- \hat{\mathbf{q}}_{(\varphi)}[t_{s}]\bigr)\Bigr\}\nonumber\\
  &\leq \frac{1}{2\tau}\mathbb{E}\Bigl\{\bigl(2\hat{\mathbf{q}}_{(\varphi)}[t_{s}] + \bigr(\bfa[t_s] - \bfr(\mathbf{w}[t_s])\bigl)\tau - 2\hat{\mathbf{q}}^*_{(\varphi)} \bigr)^{\sfT}\nonumber\\
  &\quad\times\bigl(\bfa[t_s] - \bfr(\mathbf{w}[t_s])\bigr)\Bigr\}\nonumber\\
  &=\underbrace{\frac{1}{2}\bbE\{\|\bfa[t_s] - \bfr(\mathbf{w}[t_s])\|_2^2\}}_{\triangleq \mathsf{B}_1} \nonumber\\
  &\quad + \underbrace{\frac{1}{\tau}\bbE\{(\hat{\mathbf{q}}_{(\varphi)}[t_{s}]-\hat{\mathbf{q}}^*_{(\varphi)})^\sfT\bigl(\bfa[t_s] - \bfr(\mathbf{w}[t_s])\bigr)\}}_{\triangleq \mathsf{B}_2}
\end{align}
by using the inequalities: $([x]^+)^2 \leq x^2$ and $x^2-y^2 = (x+y)(x-y)$, and the fact that $\hat{\mathbf{q}}_{(\varphi)}[t_{s+1}]- \hat{\mathbf{q}}^*_{(\varphi)}=\hat{\mathbf{q}}_{(\varphi)}[t_s] - \hat{\mathbf{q}}^*_{(\varphi)} + \bigr(\bfa[t_s] - \bfr(\mathbf{w}[t_s])\bigl)\tau.$

We first focus on providing the expected bound of $\mathsf{B}_1$ as
\begin{align}\label{eq_B2}
\mathsf{B}_1 &= \frac{1}{2}\bbE\{\|\bfa[t_s]\|_2^2 - 2\bfa[t_s]^{\sfT}\bfr(\mathbf{w}[t_s]) + \|\bfr(\mathbf{w}[t_s])\|_2^2\} \nonumber\\
&\leq \frac{1}{2}\bbE\{\|\bfa[t_s]\|_2^2  + \|\bfr(\mathbf{w}[t_s])\|_2^2\} \nonumber\\
&\leq \frac{K}{2}\bigl(A_1^{\max} + (r^{\max})^2\bigr) \triangleq \mathsf{B}_1^{\UB}
\end{align}
where the last inequality follows from Assumption \ref{assump_2}. To bound $\mathsf{B}_2$, we first rewrite it equivalently as
\begin{align}\label{eq_B3}
\mathsf{B}_2 =& \frac{1}{\tau}(\hat{\mathbf{q}}_{(\varphi)}[t_{s}]-\hat{\mathbf{q}}^*_{(\varphi)})^\sfT\bigl(\bbE\bigl\{\bfa[t_s]\} - \bfr^*\bigr) \nonumber\\
&+ \frac{1}{\tau}\bbE\bigl\{(\hat{\mathbf{q}}_{(\varphi)}[t_{s}]-\hat{\mathbf{q}}^*_{(\varphi)})^\sfT\bigl(\bfr^*
- \bfr(\mathbf{w}[t_s])\bigr)\bigr\}.
\end{align}
From \eqref{eq:JFCS2}, it follows that $(\hat{\mathbf{q}}_{(\varphi)}[t_{s}]-\hat{\mathbf{q}}^*_{(\varphi)})^\sfT\bigl(\bbE\bigl\{\bfa[t_s]\} - \bfr^*\bigr)\leq 0$.
By applying the Cauchy–Schwarz inequality, i.e. $|\mathbf{x}^\sfT\mathbf{y}|\leq \| \mathbf{x}\|_2\| \mathbf{y}\|_2$, to the first term in \eqref{eq_B3}, we have
\begin{align}
 &\frac{1}{\tau}(\hat{\mathbf{q}}_{(\varphi)}[t_{s}]-\hat{\mathbf{q}}^*_{(\varphi)})^\sfT\bigl(\bbE\bigl\{\bfa[t_s]\} - \bfr^*\bigr) \nonumber\\
 &\leq -\frac{1}{\tau}\sum_{k\in\mathcal{K}}|\hat{q}_{(\varphi),k}[t_{s}]-\hat{q}^*_{(\varphi),k}||a_k[t_s] - r^*_k|.
\end{align}
By Assumption \ref{assp:1} on $\Psi$-smooth and Step 4 of Algorithm \ref{alg_JFCS}, it is true that $a_k[t_s] - r^*_k=U_k^{'-1}\bigr(\frac{\hat{q}_{(\varphi),k}[t_{s}]}{\varphi\tau}\bigl)$ $-U_k^{'-1}\bigr(\frac{\hat{q}_{(\varphi),k}^*[t_{s}]}{\varphi\tau}\bigl)\leq 0$  and $\bigr|U_k^{'}\bigr(\frac{\hat{q}_{(\varphi),k}[t_{s}]}{\varphi\tau}\bigl)-U_k^{'}\bigr(\frac{\hat{q}_{(\varphi),k}^*[t_{s}]}{\varphi\tau}\bigl)\bigr| \leq \Psi\bigl|\frac{\hat{q}_{(\varphi),k}[t_{s}]}{\varphi\tau} - \frac{\hat{q}_{(\varphi),k}^*[t_{s}]}{\varphi\tau}\bigr|$. In addition, we have $\bigr|U_k^{'-1}\bigr(\frac{\hat{q}_{(\varphi),k}[t_{s}]}{\varphi\tau}\bigl)-U_k^{'-1}\bigr(\frac{\hat{q}_{(\varphi),k}^*[t_{s}]}{\varphi\tau}\bigl)\bigr| \geq \frac{1}{\Psi}\bigl|\frac{\hat{q}_{(\varphi),k}[t_{s}]}{\varphi\tau} - \frac{\hat{q}_{(\varphi),k}^*[t_{s}]}{\varphi\tau}\bigr|$ due to the inverse function lemma. From the fact that $(\hat{\mathbf{q}}^*_{(\varphi)})^\sfT\bfr^*
- (\hat{\mathbf{q}}^*_{(\varphi)})^\sfT\bfr(\mathbf{w}[t_s])\geq 0$, we can further bound $\mathsf{B}_2$ as
\begin{align}\label{eq_B5}
&\mathsf{B}_2 \leq -\frac{1}{\tau^2\Psi\varphi}\|\hat{\mathbf{q}}_{(\varphi)}[t_{s}]-\hat{\mathbf{q}}^*_{(\varphi)}\|_2^2 \nonumber\\
&\qquad + \frac{1}{\tau}\bbE\bigl\{(\hat{\mathbf{q}}_{(\varphi)}[t_{s}])^\sfT\bigl(\bfr^*
- \bfr(\mathbf{w}[t_s])\bigr)\bigr\}
\end{align}
where the term $\bbE\bigl\{(\bfr^*
- \bfr(\mathbf{w}[t_s]))\bigl\}$ is a constant with respect to $\hat{\mathbf{q}}_{(\varphi)}[t_{s}]$. Substituting \eqref{eq_B2} and \eqref{eq_B5} into \eqref{eq_B1} yields 
\begin{align}\label{eq_B6}
  \Delta \bar{L}(\hat{\mathbf{q}}_{(\varphi)}[t_s]) \leq & -\frac{1}{\tau^2\Psi\varphi}\|\hat{\mathbf{q}}_{(\varphi)}[t_{s}]-\hat{\mathbf{q}}^*_{(\varphi)}\|_2^2 + \mathsf{B}_1^{\UB} \nonumber\\
  & + \frac{1}{\tau}\bbE\bigl\{(\hat{\mathbf{q}}_{(\varphi)}[t_{s}])^\sfT\bigl(\bfr^*
- \bfr(\mathbf{w}[t_s])\bigr)\bigr\}.
\end{align}

We now compute the mean Lyapunov drift over $TT_f$ time-slots as
\begin{IEEEeqnarray}{rCl}\label{eq_B7}
  \Delta \bar{L}&&=\sum_{t=1}^T\sum_{s=1}^{T_f}\mathbb{E}\{L(\hat{\mathbf{q}}_{(\varphi)}[t_{s+1}]) - L(\hat{\mathbf{q}}_{(\varphi)}[t_s])|\hat{\mathbf{q}}_{(\varphi)}[1_1]\}\nonumber\\ 
  &&=\sum_{t=1}^T\sum_{s=1}^{T_f}\sum_{\hat{\mathbf{q}}_{(\varphi)}\geq 0}\Bigl(\Pro\bigl( \hat{\mathbf{q}}_{(\varphi)}[t_s]=\hat{\mathbf{q}}_{(\varphi)}|\hat{\mathbf{q}}_{(\varphi)}[1_1]\bigr)\nonumber\\
  &&\times\mathbb{E}\{L(\hat{\mathbf{q}}_{(\varphi)}[t_{s+1}]) - L(\hat{\mathbf{q}}_{(\varphi)}[t_s])|\hat{\mathbf{q}}_{(\varphi)}[t_s]=\hat{\mathbf{q}}_{(\varphi)}\} \Bigr).\qquad
\end{IEEEeqnarray}
Let us denote by $\rho_{\hat{\mathbf{q}}_{(\varphi)}}^{\infty}$ the stationary distribution of the Markov chain $\hat{\mathbf{q}}_{(\varphi)}[t_s]\geq 0$, i.e. $\rho_{\hat{\mathbf{q}}_{(\varphi)}}^{\infty}= \lim_{T\rightarrow\infty}\frac{1}{TT_f}\sum_{t=1}^T\sum_{s=1}^{T_f}\Pro\bigl( \hat{\mathbf{q}}_{(\varphi)}[t_s]=\hat{\mathbf{q}}_{(\varphi)}|\hat{\mathbf{q}}_{(\varphi)}[1_1]\bigr)$. By substituting \eqref{eq_B6} into \eqref{eq_B7} and dividing both side with $TT_f$, we have
\begin{IEEEeqnarray}{rCl}
&&\sum_{\hat{\mathbf{q}}_{(\varphi)}\geq 0}\rho_{\hat{\mathbf{q}}_{(\varphi)}}^{\infty}\Bigl(-\frac{1}{\tau^2\Psi\varphi}\|\hat{\mathbf{q}}_{(\varphi)}[t_{s}]-\hat{\mathbf{q}}^*_{(\varphi)}\|_2^2 + \mathsf{B}_1^{\UB} \nonumber\\
&&\quad +\, \frac{1}{\tau}(\hat{\mathbf{q}}_{(\varphi)}[t_{s}])^\sfT\bbE\bigl\{\bigl(\bfr^*
- \bfr(\mathbf{w}[t_s])\bigr)\bigr\}\Bigr) \nonumber\\
&&= -\frac{1}{\tau^2\Psi\varphi}\bbE\bigl\{\|\hat{\mathbf{q}}_{(\varphi)}^{\infty}-\hat{\mathbf{q}}^*_{(\varphi)}\|_2^2\bigl\} + \mathsf{B}_1^{\UB} \nonumber\\
&&\qquad + \frac{1}{\tau}\bbE\bigl\{(\hat{\mathbf{q}}_{(\varphi)}^{\infty})^\sfT\bigl(\bfr^*
- \bfr^{\infty}\bigr\} \geq 0
\end{IEEEeqnarray}
where $\bfr^{\infty} = \underset{r_k(\mathbf{w})\in \mathscr{C}_{\mathbf{H}[\infty]}, \forall k\in\mathcal{K}}{\argmax} \   \sum_{k\in\mathcal{K}} \hat{q}_k^{\infty}r_k(\mathbf{w})$. We note here that $(\hat{\mathbf{q}}_{(\varphi)}^{\infty})^\sfT \bfr^{\infty} = \underset{r_k(\mathbf{w})\in \mathscr{C}_{\mathbf{H}[\infty]}, \forall k\in\mathcal{K}}{\max}$ $\sum_{k\in\mathcal{K}} \hat{q}_k^{\infty}r_k(\mathbf{w}) \geq (\hat{\mathbf{q}}_{(\varphi)}^{\infty})^\sfT\bfr^*$, yielding
\begin{IEEEeqnarray}{rCl}
 \frac{1}{\tau^2\Psi\varphi}\bbE\bigl\{\|\hat{\mathbf{q}}_{(\varphi)}^{\infty}-\hat{\mathbf{q}}^*_{(\varphi)}\|_2^2\bigr\} - \mathsf{B}_1^{\UB} \leq 0
\end{IEEEeqnarray}
This implies that $\bbE\bigl\{\|\hat{\mathbf{q}}_{(\varphi)}^{\infty}-\hat{\mathbf{q}}^*_{(\varphi)}\|_2\bigr\} \leq \sqrt{\frac{K\tau^2\Psi}{2}\bigl(A_1^{\max} + (r^{\max})^2\bigr)}\sqrt{\varphi}$ where $\mathsf{B}_1^{\UB} = \frac{K}{2}\bigl(A_1^{\max} + (r^{\max})^2\bigr)$, showing the inequality \eqref{eq_theo1eq} in Theorem \ref{Theo_1}.

\renewcommand{\thesectiondis}[2]{\Alph{section}:}
\section{Proof of Theorem \ref{Theo_2}} \label{App_C}
\renewcommand{\theequation}{\ref{App_C}.\arabic{equation}}\setcounter{equation}{0}
To prove \eqref{eq_theo2a}, we first recall that $a_{(\varphi),k}^{\infty} - a^*_k=U_k^{'-1}\bigr(\frac{\hat{q}_{(\varphi),k}^{\infty}}{\varphi\tau}\bigl)$ $-U_k^{'-1}\bigr(\frac{\hat{q}_{(\varphi),k}^*}{\varphi\tau}\bigl)$ and $\bigr|U_k^{'}\bigr(\frac{\hat{q}_{(\varphi),k}^{\infty}}{\varphi\tau}\bigl)$ $-U_k^{'}\bigr(\frac{\hat{q}_{(\varphi),k}^*}{\varphi\tau}\bigl)\bigr| \geq \psi\bigl|\frac{\hat{q}_{(\varphi),k}^{\infty}}{\varphi\tau} - \frac{\hat{q}_{(\varphi),k}^*}{\varphi\tau}\bigr|$ using Assumption \ref{assp:1}. By the inverse function lemma, we have $\bigr|U_k^{'-1}\bigr(\frac{\hat{q}_{(\varphi),k}^{\infty}}{\varphi\tau}\bigl)$ $-U_k^{'-1}\bigr(\frac{\hat{q}_{(\varphi),k}^*}{\varphi\tau}\bigl)\bigr| \leq \frac{1}{\psi}\bigl|\frac{\hat{q}_{(\varphi),k}^{\infty}}{\varphi\tau} - \frac{\hat{q}_{(\varphi),k}^*}{\varphi\tau}\bigr|$, which yields
\begin{align}\label{eq_C1}
    \|\bfa^{\infty}_{(\varphi)} - \bfa^{*}\|_2 \leq \frac{1}{\psi\tau\varphi}\|\hat{\mathbf{q}}^{\infty}_{(\varphi)} - \hat{\mathbf{q}}^*_{(\varphi)}\|_2 \overset{\eqref{eq_theo1eq}}{\leq} \frac{\mathsf{C}_1}{\psi\tau} \frac{1}{\sqrt{\varphi}}.
\end{align}

Next, it is assumed that $U_k(\cdot)$ is twice continuously differentiable,  increasing, and strictly concave. If the utility function $ U(\bfa)$ has a maximizer $\bfa^*$, then
\begin{align}
    U(\bfa^*) - U(\bfa^{\infty}_{(\varphi)}) \leq  \frac{\Psi}{2}\|\bfa^* -\bfa^{\infty}_{(\varphi)}\|_2^2 \leq \frac{\Psi\mathsf{C}_1^2}{2\psi^2\tau^2} \frac{1}{\varphi}
\end{align}
where the last inequality follows from \eqref{eq_C1}. The proof is thus complete.

\begingroup
% \setstretch{1}
% \setlength\bibitemsep{5pt}
\balance
\bibliographystyle{IEEEtran}
\bibliography{IEEEfull}
\endgroup

\end{document}